\documentclass[11pt,a4paper]{article}

\usepackage[utf8]{inputenc}
\usepackage[T1]{fontenc}
\usepackage{lmodern}

\usepackage{geometry}
\usepackage{authblk}

\usepackage[numbers, compress]{natbib}

\usepackage{hyperref}       
\usepackage{url}            
\usepackage{booktabs}       
\usepackage{amsfonts}       
\usepackage{nicefrac}       
\usepackage{microtype}      
\usepackage{xcolor}         

\usepackage{amsmath}
\usepackage{amssymb}
\usepackage{mathtools}
\usepackage{amsthm}

\usepackage{multirow}
\usepackage{caption}
\usepackage{cancel}
\usepackage{array}

\theoremstyle{plain}
\newtheorem{theorem}{Theorem}[section]
\newtheorem{proposition}[theorem]{Proposition}

\theoremstyle{definition}

\theoremstyle{remark}

\title{\textbf{Mask2Cause: Causal Discovery via Adjacency Constrained Causal Attention} \thanks{Code is publicly available at: \url{https://github.com/omar826/Mask2Cause}}}

\author[1]{Omar Muhammad}
\author[1]{Pasupuleti Dhruv Shivkant}
\author[1]{Deepak N. Subramani}

\affil[1]{Indian Institute of Science, Bengaluru, India}

\date{} 

\begin{document}

\maketitle

\begin{abstract}
    Leveraging deep learning for causal discovery in time series remains challenging because existing neural methods predominantly rely on component-wise architectures that fail to capture shared system dynamics, or employ decoupled post-hoc graph extraction that risks overfitting to spurious correlations. We propose \textbf{Mask2Cause}, an end-to-end framework that recovers the underlying causal graph directly during the forecasting forward pass. Our approach introduces an Inverted Variable Embedding and an Adjacency-Constrained Masked Attention mechanism, trained with homoscedastic or heteroscedastic objectives to capture causal influences in both mean and variance. Empirical results on diverse benchmarks, from synthetic chaotic dynamics to realistic biological simulations, demonstrate state-of-the-art causal discovery with significantly reduced parameter complexity compared to standard baselines. We further show that inferred causal structures can be used to reduce parameter count of forecasting models by more than 70\% on average while maintaining predictive accuracy.
  
\end{abstract}

\section{Introduction}

Uncovering the latent directed relationships within multivariate time-series data is a fundamental challenge in machine learning. Time-series causal discovery addresses this by identifying the structural mechanisms governing a system's dynamics from purely observational data. By moving beyond statistical correlation to determine system dependencies, this field enables robust algorithm design for various real world problems such as forecasting, root cause analysis, counterfactual reasoning, policy intervention, signal disentanglement, and anomaly detection. Time-series causal discovery primarily encompasses constraint-based methods \citep{runge2019detecting}, score-based methods \cite{pamfil2020dynotears}, noise-based methods \cite{hyvarinen2010estimation, peters2013causal}, and Granger Causality \cite{granger1969investigating}. Granger Causality formulates causal discovery as an optimization problem for time-series forecasting: given two time-series $X$ and $Y$, a causal link $Y \to X$ is inferred if the inclusion of $Y$'s history strictly reduces the predictive error for $X$. This effectively avoids the computational bottlenecks of exponential conditional independence tests, optimization over large search spaces, and the restrictive parametric assumptions that limit other families of approaches. We adopt this approach of Granger Causality here.

\textbf{Prior work and research gap.} The earlier generation of causal discovery models assumed linear dynamics, frequently utilizing Vector Autoregressive (VAR) methods \cite{lozano2009grouped, lutkepohl2005new}. However, real-world systems in domains such as neuroscience \cite{sheikhattar2018extracting, stokes2017study, vicente2011transfer, sporns2010networks} and finance \cite{sharpe1968investments} possess inherently non-linear dynamics. This led to the development of Granger Causality in non-linear settings through component-wise neural frameworks \cite{Khanna2020Economy, nauta2019causal, tank2021neural}. However, by training a separate neural network for each target, these methods redundantly relearn shared system dynamics and scale poorly as dimensionality grows. To mitigate these scalability bottlenecks in high-dimensional discovery, shared-weight architectures like CUTS+ \cite{cheng2024cuts+} introduced Graph Neural Networks for irregular time-series. Other recent methods used sequence models, such as CausalFormer \cite{kong2024causalformer}, JRNGC \cite{zhou2024jacobian}, UnCLe \cite{bi2025unclescalabledynamiccausal}, and CSAM \cite{liu2025}. Nevertheless, many of these frameworks rely on \textit{post-hoc} analysis for causal inference. This decoupling of representation learning from graph extraction causes the underlying models to optimize for unconstrained prediction rather than causal structure integrity, carrying an inherent risk of overfitting to spurious correlations.

Furthermore, standard causal discovery models typically assume an additive noise structure, restricting discovery strictly to ``causality in mean'' (i.e., a variable is only deemed causal if it shifts the conditional expectation of the target). This simplification fails to capture critical heteroscedastic interactions where a source modulates the \textit{stability} or \textit{variability} of a target rather than its trend. For example, in financial econometrics, \textit{volatility spillovers} are a primary mode of contagion; price uncertainty in one market (e.g., energy) frequently drives instability in another (e.g., equities) without necessarily shifting the mean return~\cite{diebold2009measuring}. Similarly, in neuroscience, the hypothesis of \textit{communication through coherence} posits that upstream neuronal populations modulate the synchronization strength of downstream local field potentials rather than their average firing rates~\cite{womelsdorf2007modulation}.

To address these gaps, we propose \textbf{Mask2Cause}, an end-to-end framework for causal discovery that unifies the scalability of Transformers with the structural principles of Directed Information Graphs. We incorporate \textit{Inverted Variable Embedding} \cite{liuitransformer} to encode each variable's history as an atomic token, alongside an \textit{Adjacency-Constrained Masked Attention} mechanism that forces the model to attend only to learned causal parents via a differentiable sparse adjacency matrix. Further, Mask2Cause supports both homoscedastic and heteroscedastic objectives, capturing volatility spillovers that standard mean-focused methods fail to identify. Our contributions are as follows:

\textbf{1. Scalability:} Mask2Cause is a shared-weight architecture that utilizes \textit{Inverted Variable Embedding} to route information globally via a fully continuous self-attention matrix. This allows the model to compute all inter-variable interactions in a single, fully differentiable forward pass per training iteration, ensuring scalability for high-dimensional systems.


\textbf{2. End-to-End Architecture:} Unlike post-hoc frameworks, Mask2Cause integrates the causal structure directly into the forward pass. The structural adjacency matrix $\hat{A}$ and forecasting weights are optimized simultaneously, preventing the model from overfitting to spurious correlations.

\textbf{3. Heteroscedastic Inference \& Benchmarking:} Mask2Cause optimizes a Gaussian Negative Log-Likelihood (NLL) objective to identify ``causality in variance.'' We introduce a novel Mixed Physics benchmark containing both mean-driven and variance-driven edges, demonstrating that our model recovers volatility drivers invisible to mean-focused baselines.

\textbf{4. Causal Pruning:} We show that the discovered causal graphs enable the reduction of parameter count in forecasting models by more than 70\% on average with minimal impact on predictive precision, demonstrating that the learned structure translates into downstream computational efficiency.

\section{Mask2Cause Architecture}
\subsection{Problem Formulation}
\label{sec:problem_formulation}
We consider a multivariate time series $\mathbf{X} \in \mathbb{R}^{T \times N}$ consisting of $N$ variables observed over $T$ steps. Let $\mathbf{x}_t  = [x_t^1, \dots, x_t^N]^\top \in \mathbb{R}^N$ denote the system state at time $t$, and $\mathbf{x}_{<t} = (\mathbf{x}_{t-1}, \mathbf{x}_{t-2}, \dots, \mathbf{x}_{t-L}) \in \mathbb{R}^{N\times L}$ denote its temporal history up to a maximum lag $L$. Our primary objective is to recover the underlying Granger-causal graph $\mathcal{G}$, represented by a binary adjacency matrix $\mathbf{A} \in \{0, 1\}^{N \times N}$, where $A_{ij}=1$ denotes a directed causal link $X^j \to X^i$. We formulate this structural inference task as a predictive modeling problem, learning the adjacency matrix $\mathbf{A}$ by optimizing a forecasting objective to predict future states from historical windows.

We formulate our approach under the following standard assumptions: \label{assume} \textbf{ (1) Strict Positivity:} The joint probability distribution is strictly positive. \textbf{(2) Causal Sufficiency:} There are no unobserved confounders. \textbf{(3) Strict Temporal Precedence:} Causal influences take time to propagate. \textbf{(4) Causal Faithfulness:} True causal pathways do not perfectly cancel each other out to create artificial statistical independencies. \textbf{5. Stationarity and Ergodicity:} The underlying causal graph remain invariant over time, and the dynamics satisfy a finite-order Markov property bounded by $L$. Assumptions are elaborated in Appendix \ref{assume_app}.
Under these conditions, we formulate this structural inference task as a predictive modeling problem, learning the adjacency matrix $\mathbf{A}$ end-to-end by optimizing a forecasting objective that predicts future states $\mathbf{x}_t$ strictly from historical windows $\mathbf{x}_{<t}$.
\subsection{Core Components}
\begin{figure}[t!]
    \includegraphics[width=1\textwidth]{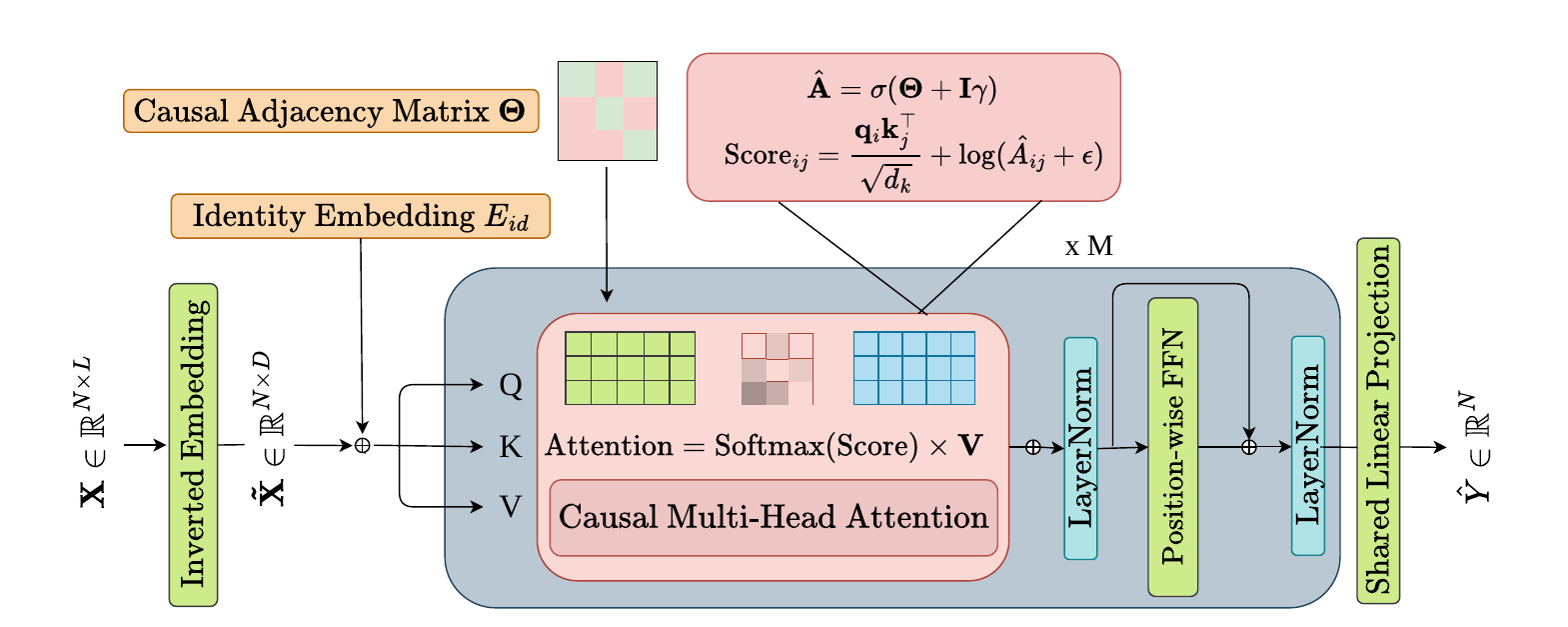}
    \caption{\textbf{The Mask2Cause Architecture.}The model maps a multivariate history into variable-specific tokens. A Transformer encoder, constrained by a learnable adjacency matrix, processes these tokens to predict the next-step state, discovering the causal graph through the forecasting objective.}
    \label{fig:causal_graph}
\end{figure}

We propose Mask2Cause, an end-to-end framework (Figure~\ref{fig:causal_graph}) that solves this structural inference task by optimizing a forecasting objective subject to differentiable constraints (without access to the ground-truth graph). The model operates on a look-back window $\mathbf{x}_{<t}$ and predicts the probabilistic state of the system at $t+1$. The architecture transforms the input through three sequential modules:\\
(1) \textbf{Inverted Variable Embedding:} Maps the raw history window of each variable into a high-dimensional latent token, establishing variables (not time steps) as the atomic units of processing.\\
(2) \textbf{Adjacency-Constrained Encoder:} A Transformer-based backbone that refines these variable tokens. It enforces the causal structure via a shared, learnable adjacency matrix $\hat{\mathbf{A}}$ that acts as a structural gate within the self-attention mechanism.\\
(3) \textbf{Prediction Head:} A projection layer that decodes the final latent states into the parameters of the conditional distribution (e.g., mean $\mu$ and variance $\sigma^2$) for the target variables.

\textbf{Graph Interpretation.} The learnable parameter $\hat{\mathbf{A}} \in [0, 1]^{N \times N}$ serves as the continuous relaxation of the discrete causal graph. It is jointly learned with the model weights to minimize the forecast loss while being regularized for sparsity. Upon convergence, the final discrete graph $\mathcal{G}$ can be obtained by thresholding the continuous entries of $\hat{\mathbf{A}}$ (see Appendix \ref{thresh} for thresholding details).

\subsection{Inverted Variable Embedding}

Standard Transformers for time-series tasks typically treat a time step $t$ as a token, embedding the vector $\mathbf{x}_t \in \mathbb{R}^N$ in a latent space \cite{wu2021autoformer, zhou2021informer}. This operation entangles the features of all variables, making it difficult to isolate the influence of the variable $j$ on $i$.

To address this, we invert the tokenization strategy. We treat the history window of a single variable as the fundamental atomic unit. We project this sequence into a $d$-dimensional embedding space
\begin{equation}
    \mathbf{e}_i = \text{Dropout}(\mathbf{W}_{emb} \mathbf{x}^{(i)}_{<t} + \mathbf{b}_{emb})\,,
\end{equation}
where $\mathbf{W}_{emb} \in \mathbb{R}^{d \times L}$ is a learnable projection matrix. To distinguish between variables, we add a learnable variable identity embedding $\mathbf{E}_{id} \in \mathbb{R}^{N \times d}$ to the projected tokens
\begin{equation}
    \mathbf{Z}^{(0)} = [\mathbf{e}_1, \dots, \mathbf{e}_N]^\top + \mathbf{E}_{id}\,.
\end{equation}
The resulting input $\mathbf{Z}^{(0)} \in \mathbb{R}^{N \times d}$ consists of $N$ tokens, where the $i$-th token encapsulates the entire recent history and identity of variable $i$.

\subsection{Causal Masked Encoder}

The core of our approach is the modification of the self-attention mechanism to enforce causal structure. In a standard Transformer, the attention score $S_{ij}$ represents the relevance of token $j$ to token $i$. We explicitly constrain these scores using a global, learnable adjacency parameter $\mathbf{\Theta} \in \mathbb{R}^{N \times N}$.

First, we obtain the continuous adjacency probability matrix $\hat{\mathbf{A}}$ via a sigmoid activation, while forcing self-loops to be active to ensure every variable can access its own history, that is,
\begin{equation}
    \hat{\mathbf{A}} = \sigma(\mathbf{\Theta} + \mathbf{I} \cdot \gamma)\,,
\end{equation}
where $\mathbf{I}$ is the identity matrix and $\gamma$ is a constant (e.g., 100) ensuring the diagonal entries approach 1. Although our sensitivity analysis (Appendix \ref{sec:sensitivity}) demonstrates that the model is fully capable of learning these self-dependencies autonomously, we explicitly enforce this prior to align with the structural inertia ($\mathbf{x}_{<t}^i$) assumed in the definition of Causally Conditioned Directed Information (see \ref{sec:theoretical_grounding}). Crucially, this single adjacency matrix is shared across all $M$ encoder layers and all attention heads, ensuring that the learned structure represents a unified global causal graph rather than layer-specific dependencies.

Next, we calculate the query ($\mathbf{Q}$), key ($\mathbf{K}$), and value ($\mathbf{V}$) matrices from the input tokens. The attention scores are computed by injecting the adjacency matrix as a logarithmic mask
\begin{equation}
    \text{Score}_{ij} = \frac{\mathbf{q}_i \mathbf{k}_j^\top}{\sqrt{d_k}} + \log(\hat{A}_{ij} + \delta)\,,
\end{equation}
where $\delta$ is a small scalar for numerical stability. This operation acts as a soft gating mechanism. If $\hat{A}_{ij} \approx 1$, the term $\log(\hat{A}_{ij}) \approx 0$, and the attention mechanism functions normally, learning the specific dynamic weight of the interaction. If $\hat{A}_{ij} \approx 0$, the term $\log(\hat{A}_{ij}) \to -\infty$, driving the softmax probability to zero and effectively severing the connection between variable $j$ and $i$. The output of the attention head is then
\begin{equation}
    \text{Attention}(\mathbf{Q}, \mathbf{K}, \mathbf{V}, \hat{\mathbf{A}}) = \text{Softmax}(\text{Score}) \mathbf{V}\,.
\end{equation}
This mechanism is wrapped in a standard encoder block with Layer Normalization and a position-wise Feed-Forward Network (implemented as a 1D convolution). The model stacks $M$ such layers, allowing for the extraction of complex non-linear representations while adhering to the connectivity defined by $\hat{\mathbf{A}}$. Aside from the masked attention mechanism, no information exchange occurs between variable tokens. The position-wise Feed-Forward Networks are applied independently to each variable (implemented via $1 \times 1$ convolutions), and layer normalization is applied per-token. This guarantees that the learnable matrix $\hat{\mathbf{A}}$ remains the sole gateway for inter-variable causal influence, rendering the graph structurally identifiable.

\subsection{Prediction and Optimization}
\label{sec:optimization}

The encoder outputs a latent representation $\mathbf{Z}^{(M)} \in \mathbb{R}^{N \times d}$. The final prediction layer and optimization objective depend on the stochastic assumptions regarding the underlying system. We use Mask2Cause in two distinct configurations:

\textbf{1. Heteroscedastic Model (Variance-Aware).}
For systems exhibiting complex stochastic dependencies, we assume the target follows a conditional Gaussian distribution $\mathcal{N}(\mu, \sigma^2)$ with time-varying parameters. This variant utilizes two separate linear projection layers to estimate the moments: $\hat{\mu}_{t+1} = \mathbf{Z}^{(M)} \mathbf{w}_\mu + \mathbf{b}_\mu$ and
    $\hat{\sigma}^2_{t+1} = \text{Softplus}( \mathbf{Z}^{(M)} \mathbf{w}_\sigma + \mathbf{b}_\sigma) + \delta$.
 We optimize the Negative Log-Likelihood (NLL), which naturally weights errors by the predicted uncertainty,
\begin{equation}
    \mathcal{L}_{\text{NLL}} = \frac{1}{|\mathcal{B}|N} \sum_{t \in \mathcal{B}} \sum_{i=1}^N \left( \frac{1}{2} \log (\hat{\sigma}^i_{t+1})^2 + \frac{(x_{t+1}^i - \hat{\mu}_{t+1}^i)^2}{2(\hat{\sigma}^i_{t+1})^2} \right)\,,
\end{equation}
where, $\mathcal{B}$ denotes the set of time indices in the current mini-batch, and $N$ is the number of variables.

\textbf{2. Homoscedastic Model (Mean-Only).}
For systems governed by additive noise where variance is independent of causal inputs (e.g., standard VAR), estimating $\sigma^2$ adds unnecessary complexity. In this regime, we employ a simplified architecture with a single linear projection head:
$
    \hat{\mathbf{x}}_{t+1} =  \mathbf{Z}^{(M)} \mathbf{w}_{\text{out}} + \mathbf{b}_{\text{out}}
$.
Assuming constant unit variance, the NLL objective reduces to the Mean Squared Error (MSE)
\begin{equation}
    \mathcal{L}_{\text{MSE}} = \frac{1}{|\mathcal{B}|N} \sum_{t \in \mathcal{B}} ||\mathbf{x}_{t+1} - \hat{\mathbf{x}}_{t+1}||^2_2\,.
\end{equation}

\textbf{Joint Optimization.}
To recover the causal graph, we minimize a joint objective combining the chosen forecasting loss ($\mathcal{L}_{\text{pred}} \in \{ \mathcal{L}_{\text{NLL}}, \mathcal{L}_{\text{MSE}} \}$) with a sparsity penalty on the off-diagonal elements of the adjacency matrix $\hat{\mathbf{A}}$,
\begin{equation}
    \mathcal{L} = \mathcal{L}_{\text{pred}} + \lambda \cdot \frac{1}{N(N-1)} \sum_{i \neq j} \hat{A}_{ij}\,.
\end{equation}
The final discrete graph is obtained by thresholding $\hat{\mathbf{A}}$ at evaluation time.

\section{Theoretical Motivation}
\label{sec:theoretical_grounding}

In this section, we motivate the architecture of Mask2Cause through the Directed Information Graph (DIG) framework \cite{quinn2015directed}, demonstrating why our sequential forecasting objective uncovers true causal dependencies even in complex heteroscedastic systems.

\subsection{From Functional Causality to Directed Information}
\label{sec:generalization}

Deep learning approaches, such as Neural Granger Causality \cite{tank2021neural}, formalize causal discovery through functional dependence. NGC postulates that the system is governed by a nonlinear vector autoregression with additive noise: $x_t^i = g_i(\mathbf{x}_{<t}) + \epsilon_t^i$. It seeks to approximate these functions $g_i$ with parameterized neural networks, such as MLPs or RNNs. A variable $X^j$ is deemed Granger non-causal for $X^i$ if the predictor of $x^i$ ($g_i$) is invariant to the history of $X^j$; i.e. $g_i(\mathbf{x}_{<t}) = g_i(\mathbf{x}'_{<t})$ for all histories $\mathbf{x}_{<t}, \mathbf{x}'_{<t}$ such that $\mathbf{x}_{<t}^{-j} = \mathbf{x'}_{<t}^{-j}$ (where the superscript $-j$ denotes the exclusion of the $j$-th variable). In practice, this is enforced by applying group-sparse penalties to the network weights.

Although functional invariance has driven recent progress, its reliance on an additive noise model restricts discovery strictly to causality in mean. This simplification fails in complex systems where causal mechanisms modulate the \textit{stability} or \textit{variability} of a target variable rather than its trend. To rigorously identify dependencies beyond first-order moments (e.g., volatility spillovers), we adopt the generalized framework of Directed Information Graphs (DIG) \cite{marko2003bidirectional, quinn2015directed}.

Unlike symmetric correlation metrics, Directed Information provides an asymmetric measure that quantifies the directed flow of information by measuring the reduction in uncertainty of a target's future given a source's past. For a set of stochastic processes $\mathbf{X}$, the DIG defines a directed edge $X^j \to X^i$ if and only if the directed information from $X^j$ to $X^i$, causally conditioned on all other variables $\mathbf{X}^{-\{{i,j\}}}$, is strictly positive. 

We quantify this edge strength using the \textit{Causally Conditioned Directed Information}, mathematically defined as the expected Kullback-Leibler (KL) divergence between the true conditional distribution of $X^i$ (given the full system history $\mathbf{x}_{<t}$) and the distribution approximated without knowledge of $X^j$'s past ($\mathbf{x}^{-j}_{<t}$):
\begin{equation}
  I(X^j \to X^i \mid \mathbf{X}^{-\{{i,j}\}})
= \sum_{t=1}^T \mathbb{E} \Big[
D_{\mathrm{KL}} \big(
P(x^i_t \mid \mathbf{x}_{<t}) \,\big\| 
P(x^i_t \mid \mathbf{x}^{i}_{<t},\mathbf{x}^{-\{{i,j}\}}_{<t})
\big)
\Big]
\label{eq:cond_di}  
\end{equation}
Crucially, the functional invariance principle utilized by NGC is merely a homoscedastic special case of this DIG framework.

\begin{theorem} Let the system evolve according to a Structural Equation Model with Additive Noise: $x^i_t = g_i(\mathbf{x}_{<t}) + \epsilon^i_t$, where $\epsilon^i_t$ is independent, homoscedastic noise. If $X^j$ is functionally non-causal for $X^i$ (i.e., $g_i$ is invariant to $x^j_{<t}$), then Causally Conditioned Directed Information $I(X^j \to X^i \mid \mathbf{X}^{-\{{i,j}\}})$ is exactly zero.
\label{thm1}
\end{theorem}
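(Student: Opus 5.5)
The plan is to show that, at every time step $t$, the two conditional distributions inside the KL divergence in \eqref{eq:cond_di} coincide almost surely. Each summand will then vanish, and so will the whole sum. Write $\mathbf{x}_{<t} = (\mathbf{x}^{j}_{<t}, \mathbf{x}^{-j}_{<t})$, where $\mathbf{x}^{-j}_{<t} = (\mathbf{x}^{i}_{<t}, \mathbf{x}^{-\{i,j\}}_{<t})$. By functional non-causality there is a measurable $\tilde g_i$ with $g_i(\mathbf{x}_{<t}) = \tilde g_i(\mathbf{x}^{-j}_{<t})$. Substituting into the structural equation gives
\[
x^i_t = \tilde g_i(\mathbf{x}^{-j}_{<t}) + \epsilon^i_t .
\]

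The first key step is to compute $P(x^i_t \mid \mathbf{x}_{<t})$. Here I will use that the noise is independent: $\epsilon^i_t$ is independent of the entire past $\mathbf{x}_{<t}$, which is generated from earlier noise terms and initial conditions, and its law $P_{\epsilon^i}$ does not depend on that past (homoscedasticity). Conditional on $\mathbf{x}_{<t}$, the variable $x^i_t$ is then a deterministic shift of $\epsilon^i_t$. So $P(x^i_t \mid \mathbf{x}_{<t})$ is $P_{\epsilon^i}$ translated by $\tilde g_i(\mathbf{x}^{-j}_{<t})$, which is a measurable function of $\mathbf{x}^{-j}_{<t}$ alone.

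The second step is to compute $P(x^i_t \mid \mathbf{x}^{-j}_{<t})$ by the tower property. For any bounded measurable $h$,
\[
\mathbb{E}[h(x^i_t)\mid \mathbf{x}^{-j}_{<t}] = \mathbb{E}\big[\mathbb{E}[h(x^i_t)\mid \mathbf{x}_{<t}]\mid \mathbf{x}^{-j}_{<t}\big] = \int h\big(\tilde g_i(\mathbf{x}^{-j}_{<t})+e\big)\,dP_{\epsilon^i}(e).
\]
The inner conditional expectation is already $\mathbf{x}^{-j}_{<t}$-measurable, so the outer conditioning does not change it. The two conditional laws are therefore equal almost surely, so $D_{\mathrm{KL}}=0$ pointwise, its expectation is $0$, and summing over $t$ gives $I(X^j\to X^i\mid \mathbf{X}^{-\{i,j\}})=0$. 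This conditional-expectation argument also avoids any need to assume densities, since the KL divergence between identical measures is $0$ regardless.

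The main obstacle is making the independence assumption precise enough. The argument needs $\epsilon^i_t \perp \mathbf{x}_{<t}$, including the $j$-coordinates, and not merely that the noise is i.i.d.\ across time. I will justify this from the SEM structure: the past is a function of initial states and $\{\epsilon_s\}_{s<t}$, and these are jointly independent of $\epsilon_t$. Causal sufficiency (Assumption 2) excludes hidden common drivers that could correlate $\epsilon^i_t$ with $x^j_{<t}$. A minor side point is that the history is truncated at lag $L$; by Assumption 5, $g_i$ depends only on the last $L$ lags, so conditioning on the finite window loses nothing.
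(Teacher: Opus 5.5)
Your proof is correct and follows essentially the same route as the paper's. Both show that, under the additive-noise model, $P(x^i_t \mid \mathbf{x}_{<t})$ is the noise law shifted by a function of $\mathbf{x}^{i}_{<t},\mathbf{x}^{-\{i,j\}}_{<t}$ alone, so it coincides with the restricted conditional and every KL term vanishes; your tower-property step, together with the explicit requirement $\epsilon^i_t \perp \mathbf{x}_{<t}$, justifies the identification $P_\epsilon\big(x^i_t - g_i(\mathbf{x}^{i}_{<t},\mathbf{x}^{-\{i,j\}}_{<t})\big) = P(x^i_t \mid \mathbf{x}^{i}_{<t},\mathbf{x}^{-\{i,j\}}_{<t})$, which the paper asserts without argument.
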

\textit{Proof.} We prove this in Appendix \ref{proof1}

\subsection{Translating Directed Information into Model Design}
Having established that the DIG framework rigorously generalizes familiar Granger causality to capture complex stochastic dependencies (Theorem \ref{thm1}), we show our optimization objective is motivated by this information-theoretic definition of causal graphs.

\textbf{Probabilistic Assumption.}
First, we assume that the conditional distribution of each variable $X^i$ at time $t+1$, given the system's history $\mathbf{x}_{<t+1}$, follows a Gaussian distribution with time-varying moments
\begin{equation}
    P(x_{t+1}^i \mid \mathbf{x}_{<t+1}) = \mathcal{N}(\mu_\theta(\mathbf{x}_{<t+1}), \sigma_\theta^2(\mathbf{x}_{<t+1}))\,,
\end{equation}
where $\mu_\theta$ and $\sigma_\theta^2$ are parameterized by our neural network.

\textbf{Linking Loss to Causality.}
To ground our Negative Log-Likelihood (NLL) metric, we invoke Proposition 1 from \cite{quinn2015directed}, which establishes a fundamental equivalence between sequential forecasting and information theory for optimal estimators as follows.

\begin{proposition}[Equivalence of Log-Loss and Directed Information]
    Let $L(P, x) = -\log P(x)$ be the logarithmic loss function. The expected cumulative reduction in loss obtained by predicting process $X^i$ using the full network history $\mathbf{X}_{<t}$ versus the history excluding process $X^j$ is exactly equal to the Causally Conditioned Directed Information from $X^j$ to $X^i$:
    
$$\mathbb{E} \left[ \sum_{t=1}^T \left( L(P_{- j}, x_t^i) - L(P_{\text{full}}, x_t^i) \right) \right] = I(X^j \to X^i \mid \mathbf{X}^{- \{i,j\}}) \quad \forall \ j\not= i.$$

\label{prop4}
\end{proposition}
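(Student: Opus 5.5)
The argument is a direct calculation. Take $P_{\text{full}}$ to be the true conditional law $P(x^i_t \mid \mathbf{x}_{<t})$ and $P_{-j}$ to be the true conditional law given the restricted history, $P(x^i_t \mid \mathbf{x}^{i}_{<t}, \mathbf{x}^{-\{i,j\}}_{<t})$. This is the sense in which the estimators are ``optimal.'' With these choices, I would expand the summand of the left-hand side term by term and match it with the KL divergence in \eqref{eq:cond_di}.

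\textbf{Step 1: rewrite each summand as a log-ratio.} For fixed $t$,
\[
L(P_{-j}, x^i_t) - L(P_{\text{full}}, x^i_t) = \log \frac{P(x^i_t \mid \mathbf{x}_{<t})}{P(x^i_t \mid \mathbf{x}^{i}_{<t}, \mathbf{x}^{-\{i,j\}}_{<t})}.
\]
Strict positivity (Assumption 1) makes this ratio well defined and finite almost surely.

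\textbf{Step 2: condition on the full past.} Apply the tower property, writing the outer expectation as $\mathbb{E}_{\mathbf{x}_{<t}}\big[\mathbb{E}_{x^i_t \mid \mathbf{x}_{<t}}[\cdot]\big]$. The restricted history $(\mathbf{x}^{i}_{<t}, \mathbf{x}^{-\{i,j\}}_{<t})$ is a function of $\mathbf{x}_{<t}$, so both densities in the log-ratio are fixed once we condition on $\mathbf{x}_{<t}$. The inner expectation, taken over $x^i_t \sim P(\cdot \mid \mathbf{x}_{<t})$, is then by definition
\[
D_{\mathrm{KL}}\big(P(x^i_t \mid \mathbf{x}_{<t}) \,\|\, P(x^i_t \mid \mathbf{x}^{i}_{<t}, \mathbf{x}^{-\{i,j\}}_{<t})\big).
\]

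\textbf{Step 3: sum over time.} Expectation is linear, so it commutes with the finite sum over $t = 1, \dots, T$. Summing the expected KL terms reproduces \eqref{eq:cond_di} exactly. This equality holds for every $j \neq i$.

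\textbf{Main obstacle.} The delicate point is integrability and well-definedness, not the algebra. We need the log-ratio to be integrable so that the tower property and the exchange of sum and expectation are legitimate. We also need the restricted conditional density to be well defined for every $t$; near $t=1$ the history is shorter than $L$ and must be handled as a truncated history. I would handle these with Assumption 1 together with a finite-entropy condition on the conditionals, which makes both expected log-losses finite; their difference is then finite and the splitting is valid. The Markov-order-$L$ part of Assumption 5 lets the history be truncated at lag $L$ without changing any conditional law.

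A final caveat is that the proposition, as stated, relies on $P_{\text{full}}$ and $P_{-j}$ being the true conditionals. For a learned Gaussian model $\mathcal{N}(\mu_\theta, \sigma^2_\theta)$, the same computation gives the directed information plus a difference of KL approximation gaps. Those gaps vanish only when the model is well specified, and I would note this explicitly rather than prove anything further.
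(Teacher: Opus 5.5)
Your calculation is correct, and it supplies something the paper does not: the paper gives no proof of this statement and imports it as Proposition 1 of \cite{quinn2015directed}. That source writes causally conditioned directed information as a difference of two causally conditioned entropies (of $X^i$ given the past without $X^j$, and given the full past). The log-loss statement then reduces to identifying the ideal predictor's expected cumulative log loss with the corresponding entropy. The same framing underlies the paper's remark that the ``baseline entropy'' term is constant.

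You instead go straight from the log-ratio to the KL form of Eq.~\eqref{eq:cond_di} via the tower property. This matches the paper's own definition and never splits the left side into two separate expected losses, so it needs less regularity than you budget for. Neither strict positivity nor a finite-entropy condition is required:
\begin{itemize}
\item $P_{-j}$ is the mixture of $P_{\text{full}}$ over $\mathbf{x}^j_{<t}$ given the restricted history, so $P_{\text{full}} \ll P_{-j}$ almost surely.
\item Since $\log^+ u \le u$, the negative part of the log-ratio has conditional expectation at most $1$. Each summand's expectation is therefore well defined in $[0,\infty]$, and the tower property applies.
\end{itemize}
Finiteness of entropies matters only on the entropy-difference route, where $\infty-\infty$ must be excluded. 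Keep your closing decomposition (directed information plus a difference of expected KL gaps). By strict properness of the log score, it is exactly what licenses reading ``ideal estimator'' as ``true conditional law.'' It also quantifies the approximation the paper invokes for its Gaussian network.

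One aside is wrong, though it does not affect the proof. Markov order $L$ of the full process does not pass to the restricted conditional, because marginalizing out $X^j$ generally destroys finite Markov order. For example, a noisy lagged observation of an AR(1) driver is an ARMA process with infinite autoregressive memory. So truncating the restricted history at lag $L$ can change that conditional law. Your identity survives because it holds for any conditioning windows, provided both sides use the same ones, as Eq.~\eqref{eq:cond_di} does with $\mathbf{x}_{<t}$. The discrepancy matters only if you want to equate the lag-$L$ quantity with the full-past directed information of \cite{quinn2015directed}.
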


Since the baseline entropy of the system is constant, the optimization objective of maximizing the total Causally Conditioned Directed Information reduces to minimizing the second term, $L(P_{\text{full}})$, which corresponds to the standard Negative Log-Likelihood (NLL) under our Gaussian assumption. It is important to note that Proposition \ref{prop4} assumes a perfectly minimizing, ideal estimator for the predictions. Although a parameterized neural network is not an exact optimal estimator, the Transformer architecture serves as a strong empirical approximation given Assumptions 1-4 (\ref{assume}). For a single-layer architecture ($M=1$), a zero in the attention mask completely severs the target variable from the masked variable's history. Consequently, the network's empirical increase in predictive loss precisely isolates the Directed Information of the excluded parent. 
Furthermore, we establish in Appendix \ref{robust} that the approximation continues to hold in practice for multiple Transformer blocks ($M \ge 2$), allowing for the use of deep architectures that leverage the capacity of Transformers in capturing complex relations. By simultaneously imposing an $L_1$ penalty on the adjacency matrix, the optimization process is compelled to retain only those parents $X^j$ whose inclusion significantly reduces this loss, or equivalently, those that contribute non-zero Directed Information, thereby recovering the DIG.

\textbf{Diagonal Forcing as Structural Prior.}
Under the strict definition of Causally Conditioned DI (Eq.~\ref{eq:cond_di}), a variable provides zero information to itself ($I(X^i \to X^i \mid \mathbf{X}^{-\{{i}\}}) = 0$) because its own history is explicitly included in the conditioning set (see Proof in Appendix~\ref{proof:zero_self_info}). To align our architecture with this definition, we employ \textit{Diagonal Forcing}, effectively hard-coding the access to $\mathbf{x}^i_{<t}$ required by the conditioning term. This inductive bias also matches the physical inertia found in standard causal discovery ground truths ($W_{ii} \neq 0$).

\section{Experiments}
\label{experiments}

To evaluate the efficacy of Mask2Cause (M2C), we performed a comprehensive benchmarking study against state-of-the-art causal discovery algorithms. We utilize a suite of datasets ranging from standard linear systems to complex biological simulations, and introduce a novel ``Mixed Physics'' benchmark designed explicitly to test causal discovery in heteroscedastic regimes.

\subsection{Baselines and Metrics}
We compare our framework against a diverse set of established baselines, including component-wise neural methods and recent transformer-based approaches: cMLP and cLSTM \cite{tank2021neural}, TCDF \cite{nauta2019causal}, CUTS\cite{cheng2023cuts}, CUTS+ \cite{cheng2024cuts+}, Causalformer \cite{kong2024causalformer}, SRU and eSRU \cite{Khanna2020Economy}, PCMCI \cite{runge2019detecting}, NGM \cite{bellotneural}, and LCCM \cite{de2020latent}. Following standard protocols for causal discovery \cite{ tank2021neural}, we assess the quality of the learned graphs using the Area Under the Receiver Operating Characteristic (AUROC). This metric quantifies the probability that a randomly selected true causal link is ranked higher than a non-existent one. We compute AUROC by sweeping a threshold $\tau$ across the continuous adjacency scores $\hat{A}_{ij} \in [0, 1]$ learned by the model to obtain the full ROC curve. We also report AUPRC, SHD, and F1 scores in Appendix \ref{metrics}.
\subsection{Datasets}
\subsubsection{Standard Benchmarks}
To ensure fair comparison and consistency with established baselines, we utilize standard dataset implementations sourced directly from the official repositories of the SRU \cite{Khanna2020Economy} and CausalTime \cite{chengcausaltime} frameworks. 

We assess performance on synthetic systems including the Vector Autoregressive ({VAR}) processes to test sparse linear recovery, and the chaotic Lorenz-96 system \cite{karimi2010extensive} (forcing constants $F=10, 40$) to evaluate nonlinear continuous-time dynamics. To validate efficacy on realistic, high-dimensional proxies, we employ the DREAM3 gene regulatory networks \cite{prill2010towards} and the CausalTime benchmark suite. DREAM3, which features extremely short trajectories ($T=21$), is purely used to evaluate how the model fares against baselines in highly data-scarce domains.

\subsubsection{Mixed Physics (Heteroscedastic)}
Standard benchmarks (VAR, Lorenz) model interactions strictly via the conditional mean (additive noise). To validate our model's ability to detect ``causality in variance'' (as motivated in Section 3.3), we introduce a synthetic Mixed Physics dataset generated via a heteroscedastic process.

\textbf{Generation Mechanism.} We simulate a system where distinct, disjoint sets of parent variables control the mean and the variance of the target. For a variable $i$, the state $x_t^i$ evolves according to:
\begin{equation}
    x_t^i = \underbrace{\sum_{j} \mathbf{W}^{\mu}_{ij} x_{t-L}^j}_{\text{Causality in Mean}} + \underbrace{\sqrt{\beta + \sum_{k} \mathbf{W}^{\sigma}_{ik} (x_{t-L}^k)^2}}_{\text{Causality in Variance}} \cdot \eta_t^i
\end{equation}
where $\eta_t^i \sim \mathcal{N}(0, 1)$, and $L$ is a fixed time lag. The matrices $\mathbf{W}^{\mu}$ and $\mathbf{W}^{\sigma}$ represent the causal strengths for mean and volatility, respectively. A causal edge $j \to i$ is either a \textit{Mean Edge} ($\mathbf{W}^{\mu}_{ij} \neq 0, \mathbf{W}^{\sigma}_{ij} = 0$) or a \textit{Variance Edge} ($\mathbf{W}^{\mu}_{ij} = 0, \mathbf{W}^{\sigma}_{ij} > 0$). This distinction renders Variance-type edges invisible to standard MSE-based models. To analyze the sensitivity of our NLL objective to heteroscedastic dynamics, we vary the ratio of Variance-to-Mean edges across three regimes: 50:50, 75: 25, 100:0 (Pure Heteroscedastic control). We evaluated Mask2Cause (NLL) against its mean-only variant (Mask2Cause-MSE) and top-performing baselines, NGC and CUTS+.

Appendix \ref{A} provides detailed specifications for all datasets and ground truth.



\begin{table}[ht]
\centering
\small
\caption{\textbf{Average AUROC on Lorenz--96 and VAR systems.}
Baselines are taken from \cite{Khanna2020Economy}, except those marked with $\dagger$ which were reproduced using the official implementations.}

    \label{tab: lorvar}
\resizebox{\textwidth}{!}{
\begin{tabular}{lcccccc}
\toprule
\textbf{Model} &
\multicolumn{4}{c}{\textbf{Lorenz--96}} &
\multicolumn{2}{c}{\textbf{VAR}} \\
\cmidrule(lr){2-5} \cmidrule(lr){6-7}
 & \multicolumn{2}{c}{$F=10$} & \multicolumn{2}{c}{$F=40$} & $T=500$ & $T=1000$ \\
\cmidrule(lr){2-3} \cmidrule(lr){4-5}
 & $T=250$ & $T=500$ & $T=250$ & $T=500$ &  &  \\
\midrule
cMLP(NGC)  
& $0.93 \pm 0.02$ & $0.96 \pm 0.03$
& $0.85 \pm 0.08$ & $0.94 \pm 0.03$
& $0.94 \pm 0.03$ & $0.93 \pm 0.02$ \\

cLSTM(NGC)
& $0.90 \pm 0.02$ & $0.95 \pm 0.05$
& $0.78 \pm 0.09$ & $0.90 \pm 0.05$
& $0.79 \pm 0.12$ & $0.80 \pm 0.09$ \\

TCDF 
& $0.70 \pm 0.01$ & $0.72 \pm 0.04$
& $0.62 \pm 0.01$ & $0.68 \pm 0.04$
& $0.77 \pm 0.07$ & $0.78 \pm 0.04$ \\

SRU  
& $0.84 \pm 0.03$ & $0.90 \pm 0.02$
& $\mathbf{1.00 \pm 0.00}$ & $\mathbf{1.00 \pm 0.00}$
& $0.82 \pm 0.06$ & $0.91 \pm 0.04$ \\

eSRU 
& $0.95 \pm 0.02$ & $0.98 \pm 0.01$
& $0.99 \pm 0.00$ & $\mathbf{1.00 \pm 0.00}$
& $0.93 \pm 0.05$ & $0.98 \pm 0.01$ \\

Causalformer$^\dagger$ 
& $0.68 \pm 0.09$ & $0.75 \pm 0.04$
& $0.85 \pm 0.05$ & $0.89 \pm 0.10$
& $0.66 \pm 0.07$ & $0.79 \pm 0.07$ \\

CUTS+$^\dagger$ 
& $0.77 \pm 0.06$ & $0.84 \pm 0.14$
& $0.94 \pm 0.02$ & $\mathbf{1.00 \pm 0.00}$
& $0.97 \pm 0.02$ & $0.99 \pm 0.005$ \\

\midrule
\textbf{M2C(MSE)}  
& $\mathbf{0.99 \pm 0.01}$ & $\mathbf{1.00 \pm 0.00}$
& $0.99 \pm 0.01$ & $\mathbf{1.00 \pm 0.00}$
& $\mathbf{1.00 \pm 0.00}$ & $\mathbf{1.00 \pm 0.00}$ \\

\textbf{M2C(NLL)}  
& $\mathbf{0.99 \pm 0.02}$ & $\mathbf{1.00 \pm 0.00}$
& $\mathbf{1.00 \pm 0.00}$ & $\mathbf{1.00 \pm 0.00}$
& $\mathbf{1.00 \pm 0.00}$ & $\mathbf{ 1.00 \pm 0.00}$ \\
\bottomrule
\end{tabular}
}
\end{table}


\begin{table*}[t]
\centering
\begin{minipage}[t]{0.45\linewidth}
    \vspace{0pt}
    \centering
    \small
    \caption{\textbf{Average AUROC on CausalTime dataset.} Baselines are taken from \cite{chengcausaltime}}
    \label{tab:causaltime}
    \resizebox{\linewidth}{!}{
    \begin{tabular}{lccc}
    \toprule
    \textbf{Method} & \textbf{Traffic} & \textbf{AQI} & \textbf{Medical} \\
    \midrule
    PCMCI  & 0.54 $\pm$ 0.07 & 0.53 $\pm$ 0.07 & 0.70 $\pm$ 0.01 \\
    NGC    & 0.60 $\pm$ 0.01 & 0.72 $\pm$ 0.01 & 0.57 $\pm$ 0.01 \\
    TCDF   & 0.50 $\pm$ 0.00 & 0.41 $\pm$ 0.02 & 0.63 $\pm$ 0.04 \\
    CUTS   & 0.62 $\pm$ 0.02 & 0.60 $\pm$ 0.00 & 0.37 $\pm$ 0.03 \\
    CUTS+  & 0.62 $\pm$ 0.08 & \textbf{0.89 $\pm$ 0.02} & 0.82 $\pm$ 0.02 \\
    LCCM   & 0.55 $\pm$ 0.03 & 0.86 $\pm$ 0.07 & 0.80 $\pm$ 0.02 \\
    eSRU   & 0.60 $\pm$ 0.02 & 0.82 $\pm$ 0.03 & 0.76 $\pm$ 0.04 \\
    SCGL   & 0.59 $\pm$ 0.06 & 0.49 $\pm$ 0.05 & 0.50 $\pm$ 0.02 \\
    \midrule
    \textbf{M2C(MSE)} & 0.68 $\pm$ 0.04 & 0.85 $\pm$ 0.03 & \textbf{0.90 $\pm$ 0.05} \\
    \textbf{M2C(NLL)} & \textbf{0.71 $\pm$ 0.02} & 0.79 $\pm$ 0.06 & 0.86 $\pm$ 0.03 \\
    \bottomrule
    \end{tabular}
    }
\end{minipage}\hfill
\begin{minipage}[t]{0.53\linewidth}
    \vspace{0pt}
    \centering
    \small
    \caption{\textbf{AUROC on DREAM3 dataset.} Baselines are taken from \cite{Khanna2020Economy}; $^\dagger$reproduced via official implementations.}
    \label{tab:dream}
    \resizebox{\linewidth}{!}{
    \begin{tabular}{lccccc}
    \toprule
    \textbf{Model} & \textbf{E.coli-1} & \textbf{E.coli-2} & \textbf{Yeast-1} & \textbf{Yeast-2} & \textbf{Yeast-3} \\
    \midrule
    cMLP(NGC)   & 0.644 & 0.568 & 0.585 & 0.506 & 0.528 \\
    cLSTM(NGC)  & 0.629 & 0.609 & 0.579 & 0.519 & 0.555 \\
    TCDF  & 0.614 & 0.647 & 0.581 & 0.556 & \textbf{0.557} \\
    SRU   & 0.657 & 0.666 & 0.617 & 0.575 & 0.550 \\
    eSRU  & 0.660 & 0.629 & 0.627 & 0.557 & 0.550 \\
    Causalformer $^\dagger$ & 0.420 & 0.261 & \textbf{0.637} & 0.483 & 0.484 \\
    CUTS+ $^\dagger$ & 0.585 & 0.563 & 0.547 & 0.532 & 0.509 \\
    \midrule
    \textbf{M2C(MSE)}   & 0.643 & 0.672 & \textbf{0.637} & 0.563 & 0.530 \\
    \textbf{M2C(NLL)}   & \textbf{0.672} & \textbf{0.687} & 0.605 & \textbf{0.578} & 0.514 \\
    \bottomrule
    \multicolumn{6}{@{}l}{}
    \end{tabular}
    }
\end{minipage}
\end{table*}

\begin{table}[t]
\centering
\begin{minipage}[b]{0.45\linewidth}
    \centering
    \includegraphics[width=\textwidth]{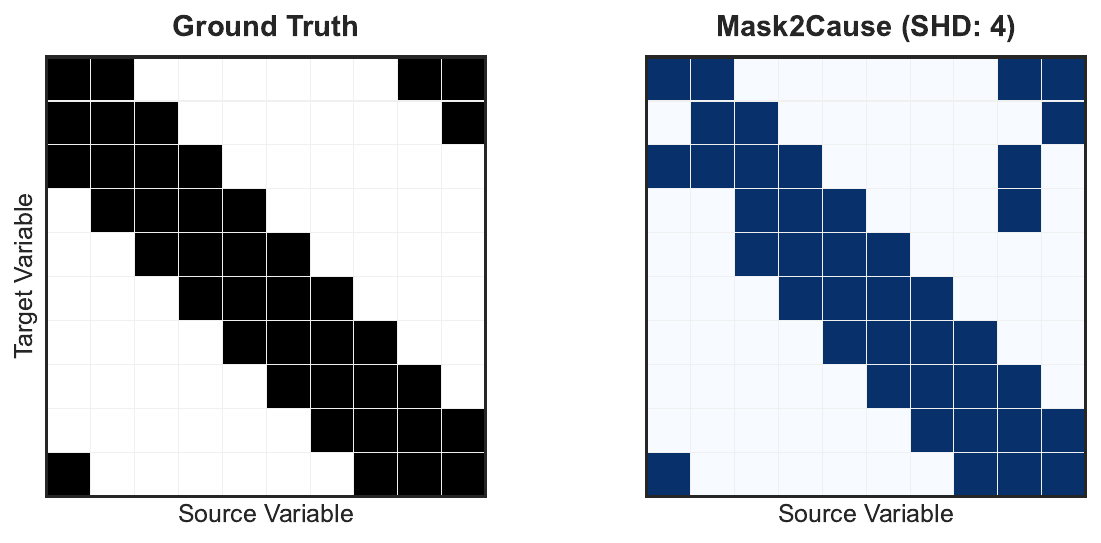}
    \captionof{figure}{Ground truth and predicted causal graph for Lorenz-96 ($F=40$, $T=250$, AUROC = 0.98)}
    \label{fig:lorenz_comparison}
\end{minipage}\hfill
\begin{minipage}[b]{0.45\linewidth}
    \centering
    \small
    \captionof{table}{\textbf{AUROC on Mixed Physics dataset.}}
    \label{tab:mixed_physics}
    \resizebox{\linewidth}{!}{
    \begin{tabular}{lccc}
    \toprule
    \multirow{2}{*}{\textbf{Model}} & \multicolumn{3}{c}{\textbf{Mixed Physics (variance: mean)}} \\
    \cmidrule(lr){2-4}
     & \textbf{50: 50} & \textbf{75: 25} & \textbf{100: 0} \\
    \midrule
    cMLP(NGC)   & 0.43 & 0.50 & 0.44 \\
    cLSTM(NGC)  & 0.54 & 0.50 & 0.52  \\
    CUTS+       & 0.61 & 0.59 & 0.59  \\
    \midrule
    \textbf{M2C(MSE)}   & 0.60 & 0.56 & 0.55  \\
    \textbf{M2C(NLL)}   & \textbf{0.77} & \textbf{0.76} & \textbf{0.73}   \\
    \bottomrule
    \end{tabular}
    }
    \vspace{10pt} 
\end{minipage}
\end{table}

\subsection{Causal Discovery Results}

 On standard synthetic systems (Table~\ref{tab: lorvar}), M2C-MSE and M2C-NLL achieves near-perfect recovery on both linear VAR and chaotic Lorenz-96 dynamics, consistently outperforming baseline methods (We further stress-test this benchmark to demonstrate robust scaling to high-dimensional systems ($N=100$) in Appendix \ref{scaling}). This strong performance extends to the realistic CausalTime suite (Table~\ref{tab:causaltime}), where M2C achieves state-of-the-art results on the Traffic and Medical datasets.

However, the advantages of our probabilistic objective emerge in more complex settings. On the biological DREAM3 benchmark (Table~\ref{tab:dream}), M2C-NLL outperforms both the baselines and its MSE counterpart, coming first in 3 out of the 5 datasets. This distinction is most critical in the \textbf{Mixed Physics} benchmark (Table~\ref{tab:mixed_physics}), designed to test heteroscedastic recovery. As we increase the proportion of variance-driving edges (50\%, 75\%, 100\%), M2C-NLL maintains robust recovery of the underlying causal structure, performing significantly better than the baselines and M2C-MSE variant (For practical guidelines on selecting between the MSE and NLL variants based on dataset characteristics, see Appendix \ref{guide}). Appendix \ref{C} has detailed sensitivity analyzes and hyperparameter tuning studies.


\textbf{Computational Analysis.} Similar to other baselines, Mask2Cause demonstrates linear scaling with sequence length and quadratic scaling with system size. However, we observe that the scaling coefficients are much more optimal than any of the competitive baselines. It incurs a low FLOP count ($2M$ FLOPs) and exceptionally light weight for standard benchmarks ( $N=10, L=5$). (Appendix \ref{D}).

\subsection{Forecasting with Causal Inductive Bias}

We use the causal map discovered by \textbf{Mask2Cause} to perform \textit{Causal Pruning} at the input level and evaluate a forecaster's performance on a single-step-ahead objective.
%
%
For any target variable $X_{i, t+1}$, the model's receptive field (input) is restricted to its predicted causal parents and its own historical value, formally defined as $P(i) = \{X_{j, t} : M_{j,i} = 1\} \cup \{X_{i, t}\}$. This decomposition of forecasting loss minimization into $N$ univariate forecasting tasks ensures that the model isolates direct causal effects and discards spurious associations driven by conditionally independent variables. It also offers an added advantage of reducing the architectural complexity from $O(N)$ to $O(|P(i)|)$ per node. For implementation strategies specific to each model class, see Appendix \ref{forecasting}.

\textbf{Empirical Results and Comparative Analysis.} To validate our hypothesis that causal pruning can improve forecasting accuracy by avoiding the influence of spurious correlations and also improve resource optimization, we evaluate the parameter reduction achieved and the gain in the mean square error over the vanilla\footnote{The vanilla versions of forecasting models have access to history of all the variables for one-step-ahead prediction task.} versions of common forecasting models in Table \ref{tab:combined_results_modelwise}. We observe that in the DREAM3 and Lorenz datasets, causal pruning achieves a better accuracy while reducing the computational overhead significantly. In VAR dataset, the parameter reduction comes at the cost of a slight drop in accuracy for some models.
Our findings show that causal pruning with \textbf{Mask2Cause} outperforms the \textbf{CUTS+} baseline for most datasets and models, indicating it discovers preciser matrices that provide a superior inductive bias (see Appendix \ref{CUTS+ forecasting appendix}) .
We also observe causal pruning performs competitively against alternative feature selection methods (see Appendix \ref{Comparison with other Feature Selection Methods}).

\begin{table}[htbp]
    \centering
    \begin{minipage}[t]{0.62\textwidth}
    \centering
    \caption{Model-wise Comparison of Parameter Reduction (PR) and MSE Reduction (MSE-R) in \%}
    \label{tab:combined_results_modelwise}
    \scriptsize 
    \setlength{\tabcolsep}{2pt} 
    \renewcommand{\arraystretch}{0.75}
    \resizebox{\linewidth}{!}{%
    \begin{tabular}{lccccccccc}
        \toprule
        & \multicolumn{5}{c}{\textbf{DREAM3} ($N=100$)} & \multicolumn{2}{c}{\textbf{Lorenz}} & \multicolumn{2}{c}{\textbf{VAR}} \\
        \cmidrule(lr){2-6} \cmidrule(lr){7-8} \cmidrule(lr){9-10}
        \textbf{Model / Metric} & \textbf{Ecoli1} & \textbf{Ecoli2} & \textbf{Yeast1} & \textbf{Yeast2} & \textbf{Yeast3} & \textbf{F10} & \textbf{F40} & \textbf{T500} & \textbf{T1000} \\
        \midrule
        \textbf{ARIMAX} \\
        \quad PR (\%) & 99.0 & 99.0 & 99.0 & 99.0 & 98.9 & 60.69 & 58.50 & 72.20 & 70.50 \\
        \quad MSE-R (\%) & +40.01 & +36.70 & +36.35 & +38.11 & +41.15 & +2.34 & +1.20 & -5.11 & -5.62 \\
        \midrule
        \textbf{MLP} \\
        \quad PR (\%) & 97.0 & 97.0 & 96.9 & 97.0 & 96.9 & 50.44 & 48.62 & 60.01 & 58.60 \\
        \quad MSE-R (\%) & +31.67 & +33.81 & +31.91 & +33.74 & +33.13 & +38.48 & +5.44 & -4.41 & -4.06 \\
        \midrule
        \textbf{N-BEATS} \\
        \quad PR (\%) & 59.2 & 59.2 & 59.2 & 59.2 & 59.2 & 7.9 & 7.6 & 4.8 & 4.7 \\
        \quad MSE-R (\%) & +33.26 & +35.25 & +32.12 & +34.10 & +34.05 & +12.93 & -19.47 & +10.72 & +1.13 \\
        \midrule
        \textbf{Linear Reg.} \\
        \quad PR (\%) & 98.0 & 98.0 & 97.9 & 98.0 & 97.9 & 54.8 & 53.2 & 65.6 & 64.0 \\
        \quad MSE-R (\%) & +36.97 & +35.07 & +34.20 & +35.57 & +38.17 & +2.50 & +2.39 & -5.02 & -5.28 \\
        \bottomrule
    \end{tabular}
    }
    \end{minipage}\hfill
    \begin{minipage}[t]{0.35\textwidth}

    \centering
    \small
    \caption{Ablation Study on M2C-MSE (avg across CausalTime) and M2C-NLL (avg across Mixed Physics) reporting AUROC}
    \label{tab:ablation}
    \resizebox{\linewidth}{!}{
        \begin{tabular}{lcc}
            \toprule
            \textbf{Variant} & \textbf{M2C-MSE} & \textbf{M2C-NLL} \\
             & \textbf{CausalTime} & \textbf{Mixed Physics} \\
            \midrule
            Layer-wise Masks & $0.66 \pm 0.04$ & $0.59 \pm 0.06$ \\
            Decoupled Heads & $0.61 \pm 0.05$ & $0.65 \pm 0.03$ \\
            Residual Pred & $0.78 \pm 0.11$ & $0.73 \pm 0.01$ \\
            \midrule
            \textbf{Full Model} & $\mathbf{0.81 \pm 0.09}$ & $\mathbf{0.75 \pm 0.02}$ \\
            \bottomrule
        \end{tabular}
        }
        \end{minipage}
\end{table}

\section{Ablation Study}
To validate our architectural design choices, we perform an ablation study on both the MSE and NLL variants of Mask2Cause. We focus on three specific components: the Global Adjacency Constraint, the Shared Projection Head, and the Prediction Target. Note that we do not ablate the \textit{Inverted Variable Embedding} or the \textit{Log-Barrier Masking} mechanism itself, as these are fundamental prerequisites for structural identifiability; removing them would revert the model to a standard Transformer, rendering explicit graph extraction impossible.

We evaluate the following variants against our full model.\\
(1) \textbf{Layer-wise Masks (No Global $\hat{\mathbf{A}}$):} Instead of a shared global adjacency matrix, we allow each encoder layer to learn a separate mask, averaging them at the end. The performance drop (Table \ref{tab:ablation}) confirms that enforcing a unified structural constraint during training is essential for aggregating causal evidence across different levels of abstraction.\\
(2) \textbf{Decoupled Heads:} We replace the shared linear output projection with variable-specific layers (a separate neural network head for each variable). The superior performance of our \textit{Shared Projection} suggests that the model benefits from learning universal dynamical laws (``shared physics'') rather than fitting variable-specific functions, which increases parameter count and risk of overfitting.\\
(3) \textbf{Residual Prediction:} We test predicting the increment $\Delta \mathbf{x}_{t+1}$ rather than the raw state $\mathbf{x}_{t+1}$. We find that direct prediction yields marginally better structural recovery, likely because the causal graph governs the absolute state transitions rather than just the residuals.

\section{Conclusion}

We presented Mask2Cause, an end-to-end framework that inverts standard tokenization to treat variables as atomic units, constraining attention via a learnable adjacency matrix. This enables the direct recovery of causal graphs within a single shared model, circumventing unscalable component-wise architectures and decoupled post-hoc extraction. Grounded in the Directed Information framework, Mask2Cause detects both mean- and variance-driven causal links, overcoming the additive noise assumption of prior models. Empirical results across diverse benchmarks demonstrate state-of-the-art discovery with heavily reduced parameter complexity. Furthermore, the inferred structures provide powerful inductive biases for downstream forecasting, enabling significant model pruning with minimal impact on predictive accuracy.

\bibliographystyle{unsrtnat}
\bibliography{references}

@article{karimi2010extensive,
  title={Extensive chaos in the Lorenz-96 model},
  author={Karimi, Alireza and Paul, Mark R},
  journal={Chaos: An interdisciplinary journal of nonlinear science},
  volume={20},
  number={4},
  year={2010},
  publisher={AIP Publishing}
}

@article{tank2021neural,
  title={Neural granger causality},
  author={Tank, Alex and Covert, Ian and Foti, Nicholas and Shojaie, Ali and Fox, Emily B},
  journal={IEEE Transactions on Pattern Analysis and Machine Intelligence},
  volume={44},
  number={8},
  pages={4267--4279},
  year={2021},
  publisher={IEEE}
}

@article{nauta2019causal,
  title={Causal discovery with attention-based convolutional neural networks},
  author={Nauta, Meike and Bucur, Doina and Seifert, Christin},
  journal={Machine Learning and Knowledge Extraction},
  volume={1},
  number={1},
  pages={19},
  year={2019},
  publisher={MDPI}
}

@inproceedings{cheng2023cuts,
  title={CUTS: Neural Causal Discovery from Irregular Time-Series Data},
  author={Cheng, Yuxiao and Yang, Runzhao and Xiao, Tingxiong and Li, Zongren and Suo, Jinli and He, Kunlun and Dai, Qionghai},
  booktitle={ICLR},
  year={2023}
}

@inproceedings{cheng2024cuts+,
  title={Cuts+: High-dimensional causal discovery from irregular time-series},
  author={Cheng, Yuxiao and Li, Lianglong and Xiao, Tingxiong and Li, Zongren and Suo, Jinli and He, Kunlun and Dai, Qionghai},
  booktitle={Proceedings of the AAAI Conference on Artificial Intelligence},
  volume={38},
  pages={11525--11533},
  year={2024}
}

@article{kong2024causalformer,
  title={Causalformer: An interpretable transformer for temporal causal discovery},
  author={Kong, Lingbai and Li, Wengen and Yang, Hanchen and Zhang, Yichao and Guan, Jihong and Zhou, Shuigeng},
  journal={IEEE Transactions on Knowledge and Data Engineering},
  year={2024},
  publisher={IEEE}
}

@inproceedings{
Khanna2020Economy,
title={Economy Statistical Recurrent Units For Inferring Nonlinear Granger Causality},
author={Saurabh Khanna and Vincent Y. F. Tan},
booktitle={International Conference on Learning Representations},
year={2020},
url={https://openreview.net/forum?id=SyxV9ANFDH}
}

@inproceedings{pamfil2020dynotears,
  title={DYNOTEARS: Structure Learning from Time-Series Data},
  author={Pamfil, Roxana and others},
  booktitle={International Conference on Artificial Intelligence and Statistics},
  year={2020}
}

@article{runge2019detecting,
  title={Detecting and quantifying causal associations in large nonlinear time series datasets},
  author={Runge, Jakob and others},
  journal={Science Advances},
  volume={5},
  number={11},
  year={2019}
}

@inproceedings{liuitransformer,
  title={iTransformer: Inverted Transformers Are Effective for Time Series Forecasting},
  author={Liu, Yong and Hu, Tengge and Zhang, Haoran and Wu, Haixu and Wang, Shiyu and Ma, Lintao and Long, Mingsheng},
  booktitle={The Twelfth International Conference on Learning Representations},
    year = {2024}
}

@inproceedings{chengcausaltime,
  title={CausalTime: Realistically Generated Time-series for Benchmarking of Causal Discovery},
  author={Cheng, Yuxiao and Wang, Ziqian and Xiao, Tingxiong and Zhong, Qin and Suo, Jinli and He, Kunlun},
  booktitle={The Twelfth International Conference on Learning Representations},
    year = {2024}
}

@article{zhou2024jacobian,
  title={Jacobian regularizer-based neural granger causality},
  author={Zhou, Wanqi and Bai, Shuanghao and Yu, Shujian and Zhao, Qibin and Chen, Badong},
  journal={arXiv preprint arXiv:2405.08779},
  year={2024}
}

@article{khanna2019neural,
  title={Economy Statistical Recurrent Units For Inferring Nonlinear Granger Causality},
  author={Khanna, Saurabh and Vincent-Lamarre, Philippe},
  journal={IEEE Transactions on Pattern Analysis and Machine Intelligence},
  volume={43},
  number={7},
  pages={2514--2528},
  year={2021},
  publisher={IEEE},
  doi={10.1109/TPAMI.2021.3065601},
  note={arXiv:1802.05842}
}

@article{granger1969investigating,
  title={Investigating causal relations by econometric models and cross-spectral methods},
  author={Granger, Clive WJ},
  journal={Econometrica: journal of the Econometric Society},
  pages={424--438},
  year={1969},
  publisher={JSTOR}
}

@article{quinn2015directed,
  title={Directed information graphs},
  author={Quinn, Christopher J and Kiyavash, Negar and Coleman, Todd P},
  journal={IEEE Transactions on information theory},
  volume={61},
  number={12},
  pages={6887--6909},
  year={2015},
  publisher={IEEE}
}

@article{marko2003bidirectional,
  title={The bidirectional communication theory-a generalization of information theory},
  author={Marko, Hans},
  journal={IEEE Transactions on communications},
  volume={21},
  number={12},
  pages={1345--1351},
  year={2003},
  publisher={IEEE}
}

@article{diebold2009measuring,
  title={Measuring financial asset return and volatility spillovers, with application to global equity markets},
  author={Diebold, Francis X and Yilmaz, Kamil},
  journal={The Economic Journal},
  volume={119},
  number={534},
  pages={158--171},
  year={2009},
  publisher={Oxford University Press Oxford, UK}
}

@article{womelsdorf2007modulation,
  title={Modulation of neuronal interactions through neuronal synchronization},
  author={Womelsdorf, Thilo and Schoffelen, Jan-Mathijs and Oostenveld, Robert and Singer, Wolf and Desimone, Robert and Engel, Andreas K and Fries, Pascal},
  journal={science},
  volume={316},
  number={5831},
  pages={1609--1612},
  year={2007},
  publisher={American Association for the Advancement of Science}
}

@inproceedings{zhou2021informer,
  title={Informer: Beyond efficient transformer for long sequence time-series forecasting},
  author={Zhou, Haoyi and Zhang, Shanghang and Peng, Jieqi and Zhang, Shuai and Li, Jianxin and Xiong, Hui and Zhang, Wancai},
  booktitle={Proceedings of the AAAI conference on artificial intelligence},
  volume={35},
  pages={11106--11115},
  year={2021}
}

@article{wu2021autoformer,
  title={Autoformer: Decomposition transformers with auto-correlation for long-term series forecasting},
  author={Wu, Haixu and Xu, Jiehui and Wang, Jianmin and Long, Mingsheng},
  journal={Advances in neural information processing systems},
  volume={34},
  pages={22419--22430},
  year={2021}
}

@inproceedings{de2020latent,
  title={Latent convergent cross mapping},
  author={De Brouwer, Edward and Arany, Adam and Simm, Jaak and Moreau, Yves},
  booktitle={International Conference on Learning Representations},
  year={2020}
}

@inproceedings{
bellotneural,
title={Neural graphical modelling in continuous-time: consistency guarantees and algorithms},
author={Alexis Bellot and Kim Branson and Mihaela van der Schaar},
booktitle={International Conference on Learning Representations},
year={2022},
url={https://openreview.net/forum?id=SsHBkfeRF9L}
}

@misc{bi2025unclescalabledynamiccausal,
      title={UnCLe: Towards Scalable Dynamic Causal Discovery in Non-linear Temporal Systems}, 
      author={Tingzhu Bi and Yicheng Pan and Xinrui Jiang and Huize Sun and Meng Ma and Ping Wang},
      year={2025},
      eprint={2511.03168},
      archivePrefix={arXiv},
      primaryClass={cs.LG},
      url={https://arxiv.org/abs/2511.03168}, 
}

@inproceedings{liu2025,
  author    = {Liu, Yusen and Wang, Yong and Yin, Yifan and Zhu, Tianqing and Liu, Xiufeng and Huo, Huan},
  title     = {{Causal Discovery with Inverted Self-attention for Multivariate Time Series}},
  booktitle = {Proceedings of the 29th Pacific-Asia Conference on Knowledge Discovery and Data Mining (PAKDD)},
  series    = {Lecture Notes in Computer Science},
  year      = {2025},
  pages     = {167--179},
  publisher = {Springer},
  doi       = {10.1007/978-981-96-8183-9_14}
}

@article{prill2010towards,
  title={Towards a rigorous assessment of systems biology models: the DREAM3 challenges},
  author={Prill, Robert J and Marbach, Daniel and Saez-Rodriguez, Julio and Sorger, Peter K and Alexopoulos, Leonidas G and Xue, Xiaowei and Clarke, Neil D and Altan-Bonnet, Gregoire and Stolovitzky, Gustavo},
  journal={PloS one},
  volume={5},
  number={2},
  pages={e9202},
  year={2010},
  publisher={Public Library of Science San Francisco, USA}
}

@article{hyvarinen2010estimation,
  title={Estimation of a structural vector autoregression model using non-gaussianity},
  author={Hyv{\"a}rinen, Aapo and Zhang, Kun and Shimizu, Shohei and Hoyer, Patrik O.},
  journal={Journal of Machine Learning Research},
  volume={11},
  pages={1709--1731},
  year={2010}
}

@inproceedings{peters2013causal,
  title={Causal inference on time series using restricted structural equation models},
  author={Peters, Jonas and Janzing, D. and Sch{\"o}lkopf, B.},
  booktitle={Advances in Neural Information Processing Systems},
  volume={26},
  pages={154--162},
  year={2013}
}

@book{sporns2010networks,
  title={Networks of the Brain},
  author={Sporns, Olaf},
  year={2010},
  publisher={MIT Press}
}

@article{vicente2011transfer,
  title={Transfer entropy—a model-free measure of effective connectivity for the neurosciences},
  author={Vicente, Raul and Wibral, Michael and Lindner, Michael and Pipa, Gordon},
  journal={Journal of Computational Neuroscience},
  volume={30},
  number={1},
  pages={45--67},
  year={2011},
  publisher={Springer}
}

@article{stokes2017study,
  title={A study of problems encountered in Granger causality analysis from a neuroscience perspective},
  author={Stokes, Patrick A and Purdon, Patrick L},
  journal={Proceedings of the National Academy of Sciences},
  volume={114},
  number={34},
  pages={E7063--E7072},
  year={2017},
  publisher={National Acad Sciences}
}

@article{sheikhattar2018extracting,
  title={Extracting neuronal functional network dynamics via adaptive Granger causality analysis},
  author={Sheikhattar, Alireza and Miran, Sina and Liu, Ji and Fritz, Jonathan B and Shamma, Shihab A and Kanold, Patrick O and Babadi, Behtash},
  journal={Proceedings of the National Academy of Sciences},
  volume={115},
  number={17},
  pages={E3869--E3878},
  year={2018},
  publisher={National Acad Sciences}
}

@book{sharpe1968investments,
  title={Investments},
  author={Sharpe, William F and Alexander, Gordon J and Bailey, Jeffery W},
  year={1968},
  publisher={Prentice Hall}
}

@inproceedings{lozano2009grouped,
  title={Grouped graphical granger modeling methods for temporal causal modeling},
  author={Lozano, Aurelie C and Abe, Naoki and Liu, Yan and Rosset, Saharon},
  booktitle={Proceedings of the 15th ACM SIGKDD International Conference on Knowledge Discovery and Data Mining},
  pages={577--586},
  year={2009},
  organization={ACM}
}

@book{lutkepohl2005new,
  title={New Introduction to Multiple Time Series Analysis},
  author={L{\"u}tkepohl, Helmut},
  year={2005},
  publisher={Springer Science \& Business Media}
}

\newpage
\appendix

\section{Notation Table}
\label{app:notation}

Table \ref{tab:notation} summarizes the primary mathematical notation used throughout the main text and appendices. 

\begin{table}[h]
\centering
\caption{Summary of Notation}
\label{tab:notation}
\renewcommand{\arraystretch}{1.2}
\begin{tabular}{lp{9cm}}
\toprule
\textbf{Symbol} & \textbf{Description} \\
\midrule
\multicolumn{2}{l}{\textit{System and Data}} \\
\midrule
$N$& Number of variables in the multivariate system \\
$T$ & Total number of observed time steps \\
$X$ & The full multivariate time series matrix $\in \mathbb{R}^{T \times N}$ \\
$x_t$ & The state of all variables at time $t$, $x_t \in \mathbb{R}^N$ \\
$x_t^i$ & The scalar value of variable $i$ at time $t$ \\
$L$ & Look-back window length (historical lag) \\
$x_{<t}$ & Temporal history of the system up to a maximum lag $L$\\
$x_{<t}^{-j}$ & Temporal history of the system excluding variable $j$ up to a maximum lag $L$\\

\midrule
\multicolumn{2}{l}{\textit{Causal Graph \& Information Theory}} \\
\midrule
$G$ & The underlying directed Granger-causal graph \\
$A$ & Ground-truth binary adjacency matrix $\in \{0, 1\}^{N \times N}$ \\
$\hat{A}$ & Learned continuous adjacency probability matrix $\in [0, 1]^{N \times N}$ \\
$\Theta$ & Global learnable adjacency parameter matrix $\in \mathbb{R}^{N \times N}$ \\
$I(X^j \rightarrow X^i | X^{-\{i,j\}})$ & Causally Conditioned Directed Information from $X^j$ to $X^i$ \\
$P(i)$ & Predicted set of causal parents for target variable $i$ \\
\midrule
\multicolumn{2}{l}{\textit{Architecture and Optimization}} \\
\midrule
$d$ & Latent embedding dimension for variable tokens \\
$E_{id}$ & Variable identity embedding matrix $\in \mathbb{R}^{N \times d}$ \\
$M$ & Number of Transformer encoder layers \\
$Z^{(m)}$ & Latent token representation after the $m$-th encoder layer \\
$Q, K, V$ & Query, Key, and Value matrices used in the masked attention mechanism \\
$\gamma$ & Diagonal forcing constant for self-loops \\
$\delta$ & Small scalar for numerical stability in the logarithmic mask \\
$\lambda$ & Sparsity penalty hyperparameter for the $L_1$ regularization \\
$\hat{\mu}_{t}^i$ & Predicted conditional mean for variable $i$ at time $t$ \\
$(\hat{\sigma}_{t}^i)^2$ & Predicted conditional variance for variable $i$ at time $t$ \\
$\mathcal{L}_{NLL}$, $\mathcal{L}_{MSE}$ & Negative Log-Likelihood and Mean Squared Error objective functions \\
\bottomrule
\end{tabular}
\end{table}

\section{Detailed Review of Related Works}
\label{related}

\textbf{Causal Discovery Approaches.} 
Causal discovery in time series is formalized through the following families of approaches:
\begin{itemize}
    \item \textbf{Constraint-based approaches} depend upon conditional independence tests to construct a skeletal graph, which is subsequently oriented using temporal priority and faithfulness rules. However, methods utilizing these tests such as PCMCI (\cite{runge2019detecting}) encounter significant bottlenecks as system complexity increases since the number of conditional independence tests required to ensure statistical reliability grows exponentially with the number of variables.
    \item \textbf{Score-based approaches} like DYNOTEARS (\cite{pamfil2020dynotears}) formulate causal discovery as a continuous optimization problem by minimizing a regularized loss function that balances data likelihood against network complexity. However, the search space for optimization-based approaches expands rapidly with the increase in variables and temporal lags. This restricts the applicability of such frameworks to low-dimensional data.
    \item \textbf{Noise-based approaches} achieve structural identifiability by exploiting statistical asymmetries in residual noise distributions under strict semi-parametric conditions, such as non-Gaussian linear or nonlinear additive noise models.
    \item \textbf{Granger causality} infers a causal link $X^p \to X^q$ if the inclusion of $X^p$'s history strictly reduces the predictive error of an autoregressive model for $X^q$. 
\end{itemize}


\textbf{Linear Granger Models.} These causal discovery models assumed linear dynamics and utilized Vector Autoregressive (VAR) models of order $P$ (\cite{lozano2009grouped}, \cite{lutkepohl2005new}). In this regime, the system dynamics for $N$ variables are governed by:
\begin{equation}
    x_t = \sum_{k=1}^{P} \mathbf{A}^{(k)} x_{t-k} + \epsilon_t
\end{equation}
where $x_t \in \mathbb{R}^N$ is the observation vector at time $t$, $\mathbf{A}^{(k)} \in \mathbb{R}^{N \times N}$ are the lag-specific coefficient matrices, and $\epsilon_t$ is the noise term. The causal structure is determined by aggregating these coefficients into an adjacency matrix $\mathbf{G}$, where a variable $X^j$ is identified as a Granger-cause of $X^i$ if there exists at least one $k \in \{1, \dots, P\}$ such that $A_{ij}^{(k)} \neq 0$. To handle high-dimensional data, these methods often employ group-sparse regularization to recover the underlying causal graph.

\textbf{Non-linear Component-wise Models.} These include models such as eSRU (\cite{Khanna2020Economy}), TCDF (\cite{nauta2019causal}) and NGC (\cite{tank2021neural}). They extract causal information by training a separate model for each variable and then inferring the causal structure from the learned model through various techniques. For instance Neural Granger Causality (cMLP) analyzes weights of the first hidden layer to interpret inter-variable influence. TCDF, on the other hand, utilizes Temporal Convolutional Networks and applies a post-hoc attention interpretation method to its learned convolutional kernels to identify causal relationships. As mentioned before, these incur heavy computational overhead for high-dimensional data.

\textbf{Non-linear Shared Weight Models.} Models like CUTS+ (\cite{cheng2024cuts+}) utilize message-passing Graph Neural Networks (GNNs) for causal discovery in irregularly sampled time-series. GNNs are fundamentally designed to aggregate features over an existing graph topology. When the causal graph is unknown, GNN-based methods must iteratively guess the topology and pass messages over it. For high-dimensional systems, this creates a severe computational bottleneck and forces reliance on complex training heuristics, such as the Coarse-2-Fine filtering strategy used by CUTS+. Other shared-weight methods, such as JRNGC (\cite{zhou2024jacobian}) and UnCLe (\cite{bi2025unclescalabledynamiccausal}), utilize standard sequence forecasters rather than GNNs. However, they rely heavily on post-hoc analysis - such as evaluating the expected input-output Jacobian matrix over the full dataset (JRNGC) or measuring prediction error spikes after chronological data permutation (UnCLe) - to extract the causal graph after forecaster's convergence. This decoupling of representation learning from graph extraction causes the underlying models to optimize strictly for unconstrained prediction rather than structural sparsity, increasing the risk of overfitting to spurious correlations.

\textbf{Transformer-based Models.} CausalFormer (\cite{kong2024causalformer}) and CSAM (\cite{liu2025}) perform post-hoc analysis in the form of Regression Relevance Propagation and statistical verification modules respectively. Hence, they inherit the risk of overfitting to spurious correlations as well. Standard Transformer architectures face another issue - they tokenize the time-series data across time steps which mixes information across variables. This makes it challenging to isolate causal influences among variables. iTransformer (\cite{liuitransformer}) provides a way to circumvent this by introducing inverted embedding, i.e. inverting the embedding axis to treat variables rather than time steps as tokens. This helps in preserving identifiability required for causal inference while also retaining the parameter efficiency of a Transformer architecture.



Table \ref{tab:method_comparison} gives a brief comparative analysis of the methodologies used in closely related works against \textbf{Mask2Cause}. 
\begin{table}[htbp]
\centering
\caption{Comparative analysis of structural identifiability and optimization mechanisms in closely related high-dimensional causal discovery models.}
\label{tab:method_comparison}
\resizebox{\textwidth}{!}{%
\renewcommand{\arraystretch}{1.4}
\begin{tabular}{>{\raggedright\arraybackslash}p{2.5cm} >{\raggedright\arraybackslash}p{3cm} >{\raggedright\arraybackslash}p{4cm} >{\raggedright\arraybackslash}p{5.5cm}}
\toprule
\textbf{Model} & \textbf{Tokenization / Embedding} & \textbf{Graph Extraction Methodology} & \textbf{Differentiation vs. Mask2Cause} \\
\midrule

\textbf{CUTS+} \newline \citep{cheng2024cuts+} & 
Latent state embeddings & 
Iterative topology sampling and message-passing Graph Neural Networks (GNNs). & 
Iterative GNN message-passing creates severe computational bottlenecks in high-dimensional systems. \textbf{Mask2Cause} replaces iterative routing with a parallelizable, continuous \textit{Adjacency-Constrained Masked Attention} mechanism. \\

\textbf{CausalFormer} \newline \citep{kong2024causalformer} & 
Time-step tokenization & 
Post-hoc extraction via Regression Relevance Propagation. & 
Time-step tokens mix variable-specific information, compromising structural identifiability. Post-hoc extraction risks overfitting to spurious correlations. \textbf{Mask2Cause} uses \textit{Inverted Variable Embedding} and restricts attention end-to-end. \\

\textbf{iTransformer} \newline \citep{liuitransformer} & 
Inverted Variable Embedding (Variable histories as tokens) & 
None (Purely forecasting). & 
iTransformer embeds long historical sequences as tokens to optimize \textit{long-term forecasting} whereas \textbf{Mask2Cause} tokenizes only short historical windows to predict a single step into the future for the explicit purpose of \textit{causal discovery}. They fundamentally differ in their objective and unlike \textbf{Mask2Cause}, iTransformer has no structural identifiability property that can enable causal inference. \\

\textbf{Inv. Self-Attention (CSAM)} \newline \citep{liu2025} & 
Inverted Variable Embedding (Variable histories as tokens) & 
Post-hoc analysis via statistical verification modules. & 
Decouples representation learning from graph extraction, optimizing the forward pass purely for prediction which introduces risk of overfitting to spurious correlations. \textbf{Mask2Cause} integrates the graph directly into the forward pass, constraining the representation learning to strictly utilize causal parents. \\

\bottomrule
\end{tabular}%
}
\end{table}

\section{Robustness of Causal Masking in Deep Architectures ($M \ge 2$)}
\label{robust}

As introduced in Section \ref{sec:theoretical_grounding}, our optimization objective is motivated by Proposition \ref{prop4}, which assumes the underlying predicting process is an ideal, perfectly minimizing estimator. To properly connect our model to this proposition, we must assume that setting $\hat{A}_{ij} = 0$ excludes variable $j$ from the prediction of variable $i$. For a single-layer architecture ($M=1$), this assumption holds perfectly, as the mask explicitly blocks the only available pathway, guaranteeing that $j$ cannot cause $i$ in the model's prediction.

However, leveraging the full capacity of Transformer architectures requires stacking encoder layers ($M \ge 2$). In a multi-layer setting, a theoretical discrepancy arises between the ideal estimator assumption and the structural mask. Specifically, even if the direct edge is penalized and masked ($\hat{A}_{ij} = 0$), the computational graph of a deep Transformer could theoretically route the necessary information of $j$ to $i$ through an intermediate conduit variable $k$, provided that $\hat{A}_{ik} > 0$ and $\hat{A}_{kj} > 0$. This multi-hop capacity of the neural network threatens to violate the strict variable exclusion required by the theoretical proposition, potentially allowing the model to minimize the NLL for variable $i$ while successfully evading the $L_1$ penalty on the direct edge $\hat{A}_{ij}$.

While this ``multi-hop leakage'' is mathematically possible in an unbounded optimization landscape, it does not manifest practically in Mask2Cause. Firstly, it is important to note that a potential ``multi-hop leakage'' is only even a theoretical concern under specific configurations of the Ground Truth: namely, graphs containing Parallel Causal Pathways. If the ground truth contains a direct edge $j \rightarrow i$ alongside an alternative causal chain (e.g., a triangle $j \rightarrow k \rightarrow i$, or a longer path $j \rightarrow k_1 \dots \rightarrow k_n\rightarrow i$), the model could theoretically evade the $L_1$ penalty on the direct edge by routing $j$'s signal through the intermediaries.  Even in such cases, the network is subject to architectural bottlenecks that make indirect routing practically unfeasible. After the usual hyperparameter tuning of the sparsity hyperparameter $\lambda$, we see that the massive NLL penalty incurred by attempting indirect routing automatically made to outweigh the $L_1$ savings of dropping the direct edge. This ensures that the theoretical equivalence to the Directed Information framework serves as a highly accurate approximation even at $M \ge 2$.
\subsection{Architectural and Optimization Bottlenecks Suppressing Leakage}

The architecture actively suppresses theoretical information leakage via two distinct mechanisms:

\textbf{1. Representational Contention and Capacity Limits.} 
To use variable $k$ as a lossless conduit for $j$, the network must allocate specific attention heads and latent subspace dimensions within $k$'s token strictly for $j$'s signal. However, under the Directed Information framework, if $j$ is a true direct causal parent of $i$ (i.e., $I(X^j \rightarrow X^i | X^{-\{i,j\}}) > 0$), then $j$ contains unique, orthogonal information in prediction of $i$ that cannot be derived from $k$'s history.
Forcing $k$ to embed this orthogonal information means $j$ and $k$ must compete for finite representational bandwidth, reducing the parameter capacity available to model $k$'s own complex trajectory. Because the network jointly optimizes the forecasting loss for all variables simultaneously, the degradation in $k$'s predictive precision causes a massive spike in the forecasting error (NLL/MSE). Consequently, the optimizer is mathematically compelled to retain the true direct edge $\hat{A}_{ij}$, as the NLL degradation incurred by the conduit node drastically outweighs the $\lambda \|\hat{A}_{ij}\|_1$ savings.

Routing $j$'s signal through $k$ only becomes advantageous if the network possesses disproportionately high representational capacity and sparsity penalty ($\lambda$). As demonstrated in our hyperparameter sensitivity analysis \ref{sec:sensitivity} on Lorenz-96 (a dataset that does contain Parallel Causal Pathways), the optimal space for $\lambda$ and $d_{model}$ is remarkably broad (indicating that standard hyperparameter tuning naturally reliably isolates the regime where indirect routing is heavily penalized).

\textbf{Note:} Conversly, for variable $k$ to losslessly route $j$'s signal without sacrificing any of the representational capacity needed for its own prediction, $k$ would have to be a deterministic, one-to-one function of $j$. However, under the Directed Information framework, if $k$ is fully determined by $j$, then $j$ provides no unique predictive information about $i$ given $k$, meaning $j$ is not a true direct parent to begin with.

\textbf{2. Shared-Weight Disentanglement.} 
Mask2Cause applies a universally shared Feed-Forward Network and final projection head across all variable tokens. This constraint in the architecture allows us to argue that routing is not prefered by the model even when the network has plenty of representation capacity. Let us assume that node $k$ could partition its latent vector, dedicating one subspace to its own dynamics and an orthogonal "null space" to $j$'s smuggled signal, thereby protecting its own prediction from interference. To route a multiplexed state through a dense non-linear layer without the signals permanently cross-contaminating, the shared weight matrices (within the FFNs) would have to be perfectly block-diagonal. Because Mask2Cause shares these FFNs across all $N$ variables, the optimizer cannot force the weights to act as a specialized, block-diagonal demultiplexer exclusively for node $k$ without severely crippling representation learning for the rest of the graph. Consequently, $j$'s smuggled signal inevitably entangles with $k$'s dynamics, corrupting $k$'s predictive output and incurring a massive forecasting loss that drastically outweighs the $L_1$ savings of dropping the direct edge.

\subsection{Empirical Validation}

To empirically validate that the optimization landscape strictly prefers direct causal edges over multi-hop routing, we conducted a masking ablation study. We locked the adjacency matrices and compared the baseline forecasting MSE of the true causal mask against an ablated mask. 

For the Lorenz-96 system, the underlying ODE dictates that the true parents of node $i$ are $i-2, i-1, i$, and $i+1$. We ablated the mask by explicitly removing the direct edge $i-2 \to i$, forcing the model to attempt to route the necessary information of $i-2$ exclusively through the intermediate parent $i-1$. The original VAR dataset does not have this potential weakness to test. So, we conducted a similar ablation on a custom VAR dataset parameterized such that $x_{i-2} \to x_i$ and $x_{i-1} \to x_i$.

\begin{table}[h]
\centering
\caption{MSE Degradation under Forced Multi-Hop Routing (Ablated Mask)}
\label{tab:mask_ablation}
\resizebox{\linewidth}{!}{
\begin{tabular}{lccc}
\toprule
\textbf{Dataset} & \textbf{True Mask (MSE)} & \textbf{Ablated Mask (MSE)} & \textbf{Difference ($\Delta_{\text{MSE}}$)} \\
\midrule
Lorenz-96, $F=40, T=500$ & $0.028651$ & $0.076617$ & $+0.047966$ \\
Lorenz-96, $F=40, T=250$ & $0.045148$ & $0.100554$ & $+0.055406$ \\
Lorenz-96, $F=10, T=500$ & $0.011014$ & $0.016673$ & $+0.005659$ \\
Lorenz-96, $F=10, T=250$ & $0.009012$ & $0.013292$ & $+0.004280$ \\
\midrule
VAR, $T=1000$            & $0.499738$ & $0.503702$ & $+0.003964$ \\
VAR, $T=500$             & $0.521975$ & $0.533000$ & $+0.011025$ \\
\bottomrule
\end{tabular}
}
\end{table}

In our baseline Lorenz-96 configuration, the effective $L_1$ sparsity penalty ($\lambda$) per node is $0.0002$. Dropping the 10 direct edges across the system saves only $0.002$ in the total $L_1$ penalty. As shown in Table \ref{tab:mask_ablation}, the MSE degradation ($\Delta_{\text{MSE}}$) caused by forcing multi-hop routing entirely dwarfs the $\lambda$ savings. The optimizer mathematically prefers direct edges.

\textbf{Robustness to Encoder Depth.} 
Because the architectural bottlenecks naturally regularize the flow of information, Mask2Cause remains remarkably robust even at extreme depths where multi-hop routing would theoretically be easiest. As shown in Table \ref{tab:depth_robustness}, scaling the number of encoder layers ($M$) up to $20$ yields little to no degradation in structural recovery (in fact, we see an improvement in evaluation for Lorenz-96). We default to $M=2$ in our main experiments solely to minimize parameter count, computational overhead, and the risk of overfitting in data-scarce regimes.

\begin{table}[h]
\centering
\caption{Causal Discovery Performance vs. Encoder Depth ($M$) on Lorenz-96 ($F=40, T= 500$) and VAR ($T=500$}
\label{tab:depth_robustness}
\begin{tabular}{lcccc}
\toprule
\textbf{Dataset} & \textbf{Encoders ($M$)} & \textbf{AUROC} & \textbf{AUPRC} & \textbf{SHD} \\
\midrule
Lorenz-96 & 1  & 0.997 & 0.996 & 4 \\
Lorenz-96 & 2  & 0.998 & 0.998 & 2 \\
Lorenz-96 & 4  & 1.000 & 1.000 & 0 \\
Lorenz-96 & 10 & 1.000 & 1.000 & 0 \\
Lorenz-96 & 20 & 1.000 & 1.000 & 0 \\
\midrule
VAR & 1  & 1.000 & 1.000 & 0 \\
VAR & 2  & 1.000 & 1.000 & 0 \\
VAR & 4  & 1.000 & 1.000 & 0 \\
VAR & 10 & 0.999 & 0.997 & 2 \\
VAR & 20 & 0.999 & 0.998 & 2 \\
\bottomrule
\end{tabular}
\end{table}

\section{Theoretical Assumptions for Causal Identifiability}
\label{assume_app}

Our framework relies on five standard assumptions for our model to discover discover the Ground Truth and for it to be unique.

\textbf{1. Strict Positivity (Non-Determinism):} 
We assume the true joint probability density of the system is strictly positive over the entire state space $\mathcal{X}$:
\begin{equation}
    P(\mathbf{x}) > 0 \quad \forall \mathbf{x} \in \mathcal{X}
\end{equation}
\textit{Necessity:} This condition guarantees that no variable is a strictly deterministic, noiseless function of another. In purely deterministic systems, information becomes redundant, creating unresolvable causal symmetries. For example, suppose variable $Z$ is a deterministic copy of $X$ ($Z_t = X_{t-1}$), and $Y$ depends on $Z$ with some noise ($Y_t = Z_{t-1} + \epsilon_t$). Because $Z_{t-1}$ is mathematically indistinguishable from $X_{t-2}$, substituting $X$ for $Z$ yields the exact same predictive loss. Consequently, an optimizer cannot distinguish between the causal chain $X \rightarrow Z \rightarrow Y$ and an alternative graph containing the direct edge $X \rightarrow Y$. This symmetry renders the minimal generative causal graph non-unique. Positivity ensures that every variable possesses at least a marginal amount of independent noise, breaking these symmetries and ensuring that the minimal generative causal graph $\mathcal{G}$ is unique.

\textbf{2. Causal Sufficiency:} 
We assume there are no unobserved (hidden) confounding variables that simultaneously influence two or more observed variables in our system $\mathbf{X}$. 

\textit{Necessity:} If a hidden confounder exists, the network's optimizer will observe a spurious statistical correlation between the variables and hallucinate a direct causal edge to minimize the forecasting loss. Sufficiency ensures all inferred edges represent true causal mechanisms within the observed system.

\textbf{3. Strict Temporal Precedence:} 
We assume that causal influences strictly take time to propagate, precluding instantaneous (intra-step) causal effects.

\textit{Necessity:} Under this assumption, the state of variable $i$ at time $t$ is strictly determined by historical states and independent noise, rather than the concurrent states of other variables. This justifies our model's autoregressive design, where predictions for $\mathbf{x}^i_t$ are conditioned exclusively on the strictly past window $\mathbf{x}_{<t}$.

\textbf{4. Causal Faithfulness:} 
We assume the observed probability distribution is faithful to the causal graph $\mathcal{G}$. 
\textit{Necessity:} This ensures that true causal pathways do not feature "perfect cancellations" (e.g., a positive direct effect perfectly negated by a negative mediated effect). If unfaithful cancellations occurred, the variables would appear statistically independent, and the $\mathcal{L}_1$ sparsity penalty ($\lambda$) would incorrectly prune a true structural edge.  Consider a simple linear system where variable $X$ physically causes $Z$ through two distinct paths: a direct positive effect at lag 2, and an indirect negative effect routed through variable $Y$.
\begin{align*}
    Y_{t} &= X_{t-1} + \epsilon_{Y,t} \\
    Z_{t} &= X_{t-2} - Y_{t-1} + \epsilon_{Z,t}
\end{align*}
Structurally, the direct causal edge $X \rightarrow Z$ definitively exists ($A_{zx} = 1$). However, if we substitute the equation for $Y_{t-1}$ into $Z_t$, we get $Z_t = X_{t-2} - (X_{t-2} + \epsilon_{Y,t-1}) + \epsilon_{Z,t} = \epsilon_{Z,t} - \epsilon_{Y,t-1}$. 

Because the positive direct effect perfectly cancels the negative mediated effect, the historical signal of $X$ vanishes entirely from $Z$'s probability distribution, resulting in $I(X \rightarrow Z \mid Y) = 0$. In our model, if such unfaithful cancellations occurred, the variables would appear independent to the forecasting objective, and the $\mathcal{L}_1$ sparsity penalty ($\lambda$) would incorrectly prune the true direct edge $\hat{A}_{zx}$. The Faithfulness assumption mathematically outlaws these coincidental cancellations.

\textbf{5. Stationarity and Finite Markov Order:} 
We assume the graph topology and system dynamics are invariant over time, and that the conditional transition probabilities satisfy a finite-order Markov property bounded by our look-back window $L$:
\begin{equation}
    P(\mathbf{x}_t \mid \mathbf{x}_{0:t-1}) = P(\mathbf{x}_t \mid \mathbf{x}_{t-L:t-1})
\end{equation}
\textit{Necessity:} This guarantees that the complete causal footprint required to predict the next state is fully contained within the finite context window of the Transformer. Our model also assumes a stationary causal graph while trying to infer it. 

\section{Datasets}
\label{A}

To evaluate the efficacy of our approach in identifying causal structures, we utilize the following standard and synthetic benchmarks with corresponding ground truth causal graphs:
\begin{itemize}
    \item Synthetic : Vector Autoregressive (VAR) (Linear), Lorenz-96 (Non-Linear)
    \item Real-World : Causal Time \cite{chengcausaltime}, DREAM3
    \item Heteroscedastic : Mixed Physics
\end{itemize}  

\subsection{Vector Autoregressive Dataset (Linear)}
To evaluate the model's performance on linear systems with explicit multi-step memory, we simulate sparse Vector Autoregressive (VAR) processes. A VAR process of order $K$, denoted as VAR($K$), evolves according to the linear recurrence:
\begin{equation}
    \mathbf{x}_{t} = \sum_{k=1}^{K} B^{(k)} \mathbf{x}_{t-k} + \mathbf{\eta}_{t}
\end{equation}
where $\mathbf{x}_t \in \mathbb{R}^p$ is the state vector at time $t$, $B^{(k)} \in \mathbb{R}^{p \times p}$ is the transition matrix representing dependencies at lag $k$, and $\mathbf{\eta}_t \sim \mathcal{N}(0, 0.1^2 I)$ is isotropic Gaussian noise.


The \textbf{Ground Truth Causal Graph} $A \in \{0, 1\}^{p \times p}$ is defined by the existence of a dependency at \textit{any} lag. Specifically, $A_{ij} = 1$ (variable $j$ causes variable $i$) if and only if the coefficient for $j \to i$ is non-zero in at least one lag matrix:
\begin{equation}
    A_{ij} = \mathbb{I}\left( \sum_{k=1}^{K} \left| B_{ij}^{(k)} \right| > 0 \right)
\end{equation}
where $\mathbb{I}(\cdot)$ is the indicator function. This allows us to test if the model can aggregate causal signals distributed across multiple past time lags. Figure \ref{fig:VAR_ground_truth} shows the ground truth causal graph for a 10 variable system ($p=10$). We source the dataset from \cite{khanna2019neural}, accessible at \url{https://github.com/sakhanna/SRU_for_GCI/tree/master} (MIT License).
\begin{figure}[h]
    \centering
    \includegraphics[width=0.4\linewidth]{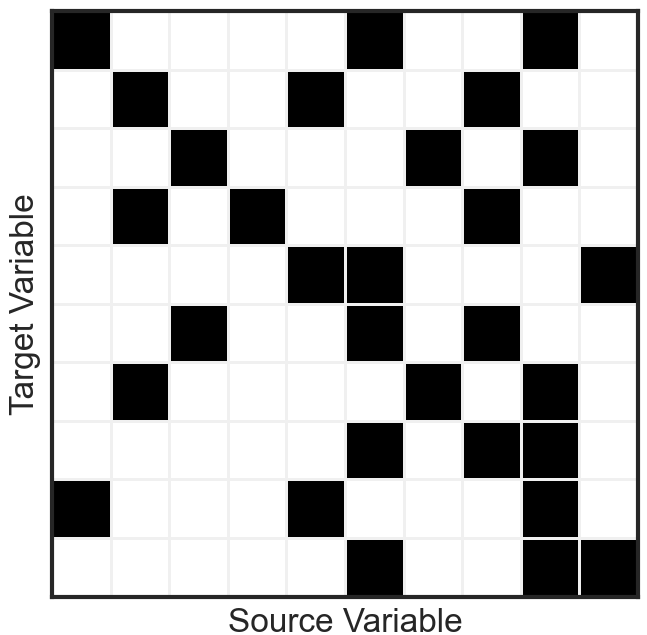}
    \caption{Ground Truth for p=10 VAR system}
    \label{fig:VAR_ground_truth}
\end{figure}


\subsection{Lorenz-96 Dataset (Nonlinear)}

The Lorenz-96 model is a continuous-time dynamic system often used to simulate complex atmospheric physics \cite{karimi2010extensive}. It consists of $p$ variables $x_1, \dots, x_p$ arranged in a cyclic dependency structure. The evolution of variable $x_i$ is governed by the system of ordinary differential equations (ODEs):
\begin{equation}
    \frac{d x_{i}}{d t} = (x_{i+1} - x_{i-2}) x_{i-1} - x_{i} + F
    \label{eq:lorenz}
\end{equation}
where indices are taken modulo $p$. The forcing constant $F$ determines the level of nonlinearity and chaos in the system. Following the benchmarks used in eSRU \cite{Khanna2020Economy}, we have $p=10$ and evaluate for $F=10$ and $F=40$ (a more turbulent regime). The high forcing constant in $F=40$ regime amplifies the quadratic interaction terms, making nonlinear dependencies on parent variables significantly more dominant relative to linear decay.



The \textbf{Ground Truth Causal Graph} $A \in \{0, 1\}^{p \times p}$ is defined as the sparse, static adjacency matrix $A \in \{0, 1\}^{p \times p}$ derived from the ODE mechanism: $A_{ij} = 1$ if and only if $j \in \{i-2, i-1, i, i+1\} \pmod p$. This corresponds to a strict \textbf{Lag-1} causal dependency. Figure \ref{fig:LOR_ground_truth} shows the ground truth causal graph for a 10 variable system ($p=10$). We source the dataset from \cite{khanna2019neural}, accessible at \url{https://github.com/sakhanna/SRU_for_GCI/tree/master} (MIT License).
\begin{figure}[h]
    \centering
    \includegraphics[width=0.4\linewidth]{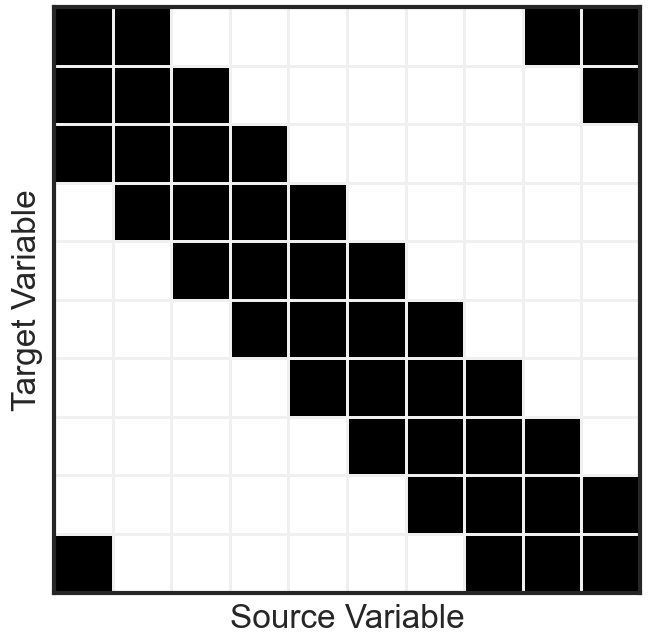}
    \caption{Ground Truth Causal Graphs for p=10 Lorenz-96 system}
    \label{fig:LOR_ground_truth}
\end{figure}


\subsection{CausalTime Datasets}
To rigorously evaluate performance on realistic, high-dimensional data, we utilize the CausalTime benchmark suite \cite{chengcausaltime}. Unlike purely synthetic datasets (VAR, Lorenz) or real-world datasets with unverified ground truths, CausalTime employs a Real-to-Generated pipeline - It fits a deep generative model to real observational data and then generates new samples from the fitted model. This ensures the data retains realistic statistical properties while adhering to a known, mathematically precise causal graph.

\par The underlying generation process models the time series using a Non-Linear Autoregressive (NAR) framework implemented via Causally Disentangled Neural Networks (CDNNs) \cite{chengcausaltime}. The value of variable $i$ at time $t$ is generated as:
\begin{equation}
    \hat{x}_{t,i} = f_{\theta_i}(\text{Parents}(x_{t,i})) + \hat{x}_{t-1, i}^R + \eta_{t,i}
\end{equation}
where $f_{\theta_i}$ represents the causal mechanism, $\hat{x}_{t-1, i}^R$ is a residual term capturing non-causal dynamics, and $\eta_{t,i}$ is noise modeled by a Normalizing Flow. This decomposition allows for the rigorous definition of a ground truth graph based solely on the inputs to $f_{\theta_i}$, while the residual term ensures the trajectory complexity matches real-world recordings.

\textbf{Domains and Configuration:}
We evaluate our method on three distinct domains provided by the benchmark:
\begin{itemize}
    \item \textbf{Traffic ($p=20$):} Based on traffic speed data from sensors in the San Francisco Bay Area. The causal structure is sparse and informed by the physical road network topology.
    \item \textbf{AQI ($p=36$):} Derived from PM2.5 air quality measurements across 36 monitoring stations in China. The underlying graph reflects geographic proximity, where causal links exist between stations within a 40 km radius.
    \item \textbf{Medical ($p=20$):} Constructed from the MIMIC-IV database, tracking vital signs and clinical events for ICU patients. Unlike the spatial datasets, this domain lacks a geometric prior, relying on the complex physiological interactions captured by the CDNN to define the causal structure.
\end{itemize}

For all three datasets, the \textbf{Ground Truth Causal Graph} is the binary adjacency matrix $A$ used to constrain the generation process. We utilize the standard benchmark configuration consisting of 500 generated samples (each of length $T=40$) for each domain.

Figure \ref{fig:CausalTime_ground_truth} shows the ground truth causal graph for Traffic, AQI and Medical datasets. We source the dataset from \cite{chengcausaltime}, accessible at \url{https://www.causaltime.cc/} (MIT License).

\begin{figure}[h]
    \centering
    \includegraphics[width=1\linewidth]{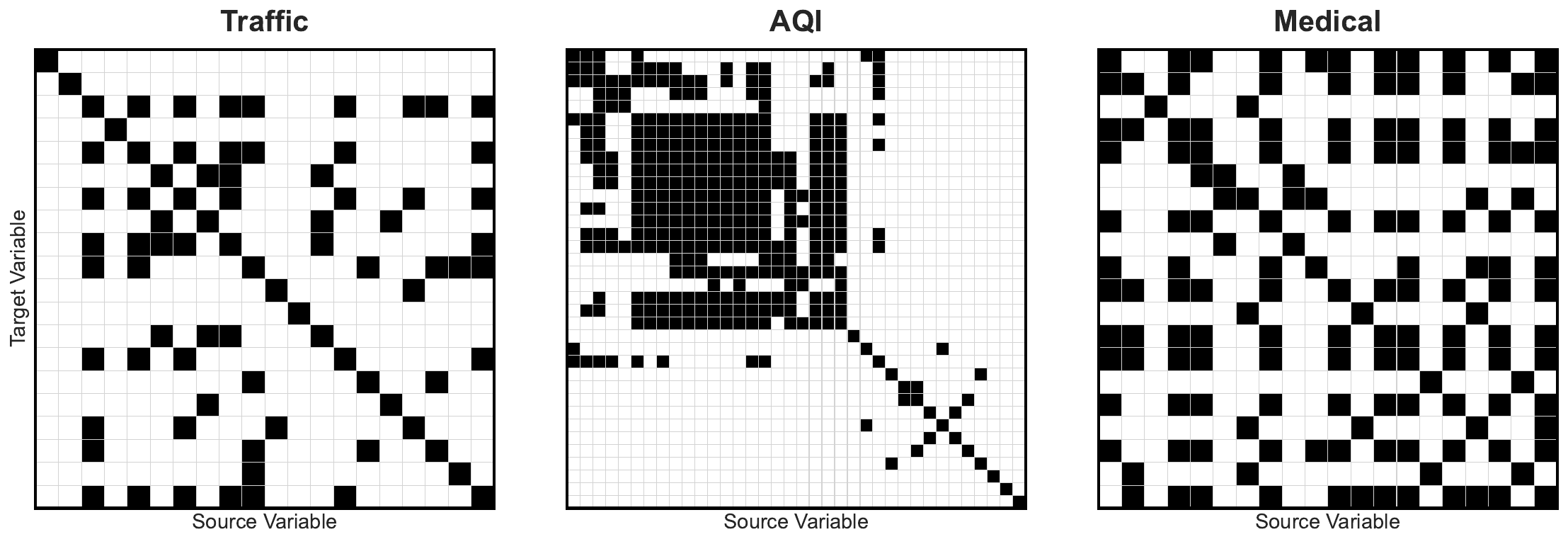}
    \caption{Ground Truth Causal Graphs for CausalTime datasets}
    \label{fig:CausalTime_ground_truth}
\end{figure}
\subsection{DREAM3 Datasets}
To evaluate performance on systems governed by continuous-time dynamics, we utilize the DREAM3 \textit{InSilico} benchmark \cite{prill2010towards}. This suite provides a standardized platform for causal discovery in networks where interactions are defined by biologically-inspired structural equations.\par The trajectories are generated using the \textit{GeneNetWeaver} (GNW) simulator. The dynamics of the $i$-th variable $x_i$ are modeled by a system of ordinary differential equations (ODEs):
\begin{equation}
\frac{dx_i}{dt} = m_i \cdot f_i(x_{\text{Parents}(i)}) - \lambda_i x_i \end{equation}
where $f_i(\cdot)$ is a non-linear input function (modeled via Hill-cube kinetics) representing the causal influence of parent nodes, $m_i$ is the maximum production rate, and $\lambda_i$ is the degradation rate. This formulation ensures that the ground truth graph is mathematically defined by the non-zero partial derivatives of the production term with respect to the input variables.
\textbf{Domains and Configuration:}
The benchmark utilized consists of five distinct networks in the 100-node ($p=100$) category:
\begin{itemize}
\item \textbf{Ecoli1 \& Ecoli2:} Sub-networks derived from the \textit{Escherichia coli} transcriptional map.
\item \textbf{Yeast1, Yeast2, \& Yeast3:} Sub-networks derived from the \textit{Saccharomyces cerevisiae} (Yeast) regulatory map.
\end{itemize}
For all networks, the \textbf{Ground Truth Causal Graph} is the binary adjacency matrix used to define the ODE couplings. The evaluation utilizes time-series data consisting of 46 independent trajectories, each containing $T=21$ time points. Figure \ref{fig:DREAM3_ground_truth} illustrates the high sparsity and topological structure of these causal matrices. We source the dataset from \cite{khanna2019neural}, accessible at \url{https://github.com/sakhanna/SRU_for_GCI/tree/master} (MIT License).

\textbf{Note: } 46 independent trajectories of $21$ time points each is an extremely scarce dataset for learning a Causal Graph containing a $100$ nodes. We use this purely as a stress test to see how our model would perform against baselines in such a data scarce regime.

\begin{figure}[h]
\centering
\includegraphics[width=1\linewidth]{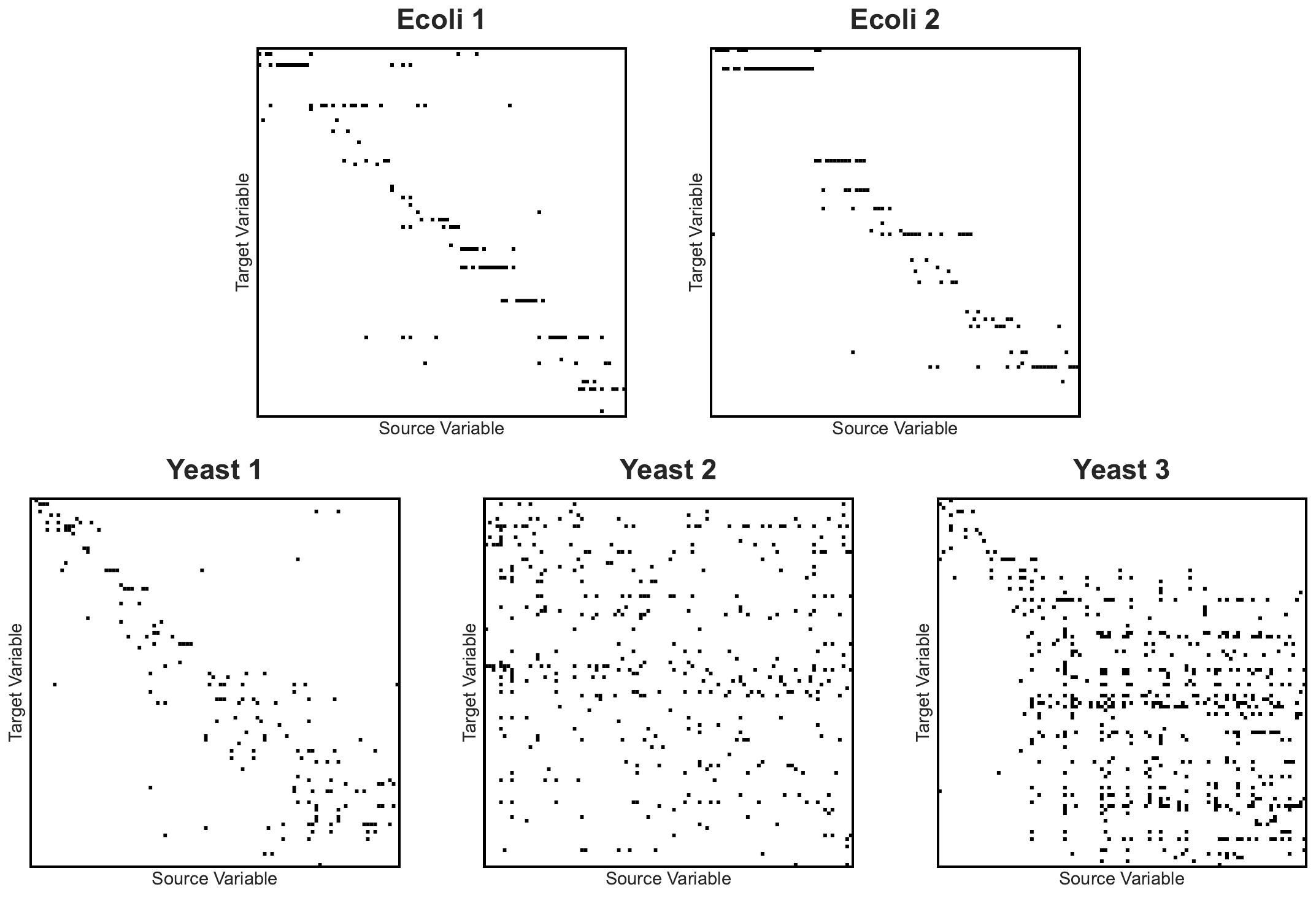}
\caption{Ground Truth Causal Graphs for DREAM3 networks}
\label{fig:DREAM3_ground_truth}
\end{figure}

\subsection{Mixed Physics (Heteroscedastic)}
\label{mixed physics appendix}
We generate the Mixed Physics benchmark using a heteroscedastic process where the conditional mean and volatility of each variable are governed by disjoint causal parents. The state $x_t^i$ evolves according to:
\begin{equation}
    x_t^i = \underbrace{\sum_{j=1}^{N} \mathbf{W}^{\mu}_{ij} x_{t-L}^j}_{\text{Causality in Mean}} + \left( \underbrace{\beta + \sum_{k=1}^{N} \mathbf{W}^{\sigma}_{ik} (x_{t-L}^k)^2}_{\text{Causality in Variance}} \right)^{1/2} \cdot \eta_t^i
\end{equation}
where $\eta_t^i \sim \mathcal{N}(0, 1)$ represents standard Gaussian noise, $L$ denotes a fixed time lag, and $\beta$ is a baseline variance constant. The matrices $\mathbf{W}^{\mu}$ and $\mathbf{W}^{\sigma}$ define the causal strengths for mean and volatility, respectively.

\textbf{Configurations.} We generate datasets with $N=10$ variables and a fixed graph density of $30\%$. We vary the ratio of Mean-to-Variance edges across three regimes:
\begin{itemize}
    \item \textbf{100:0 (Control) :} A purely heteroscedastic system where all causal links reside in the variance.
    \item \textbf{75:25 :} A hybrid system where 75\% of the causal structure is hidden from mean-based estimators.
    \item \textbf{50:50 :} A highly homoscedastic regime where half of the causal interactions influence only the volatility while the other half influence the mean.
\end{itemize}
Figure \ref{fig:MP_ground_truth} illustrates the high sparsity and topological structure of these causal matrices.

\begin{figure}[h]
\centering
\includegraphics[width=1\linewidth]{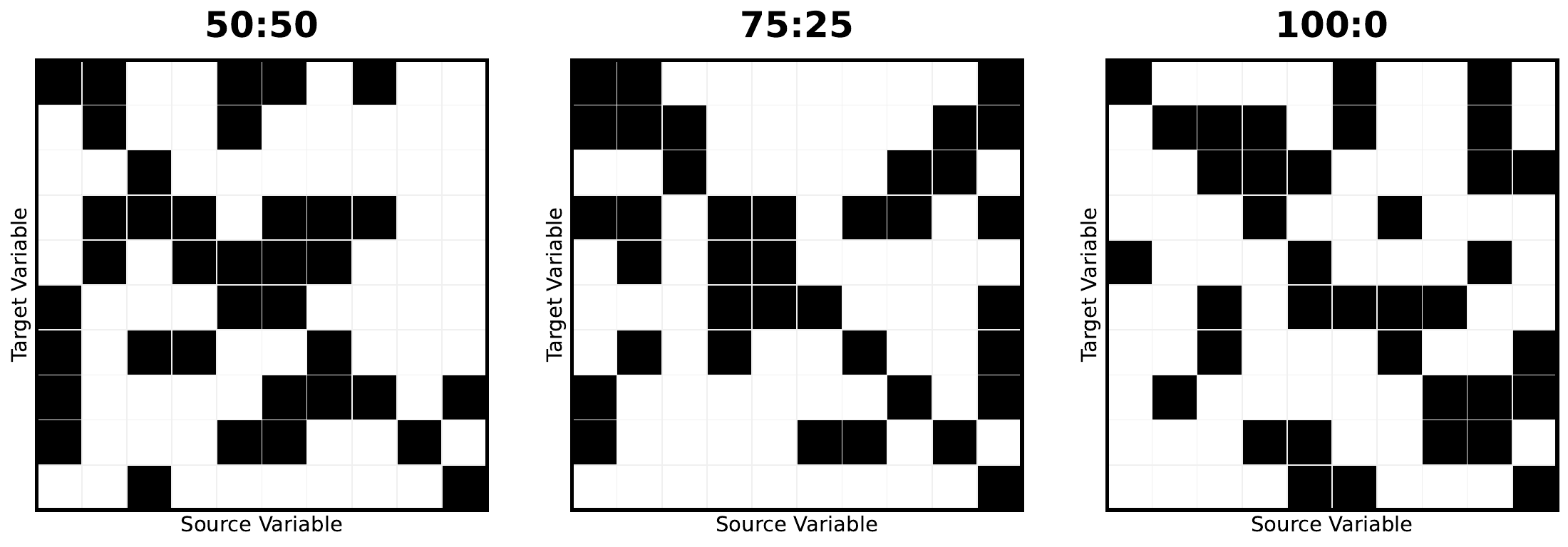}
\caption{Ground Truth Causal Graphs for Mixed Physics system}
\label{fig:MP_ground_truth}
\end{figure}

\section{Computational Complexity Analysis}
\label{D}

We evaluate the efficiency of Mask2Cause using hardware-independent metrics: Parameter Count and Floating Point Operations (FLOPs). We conduct a single-sample inference sweep (Batch Size=1) to quantify scaling behavior with respect to history length $L$ and system size $N$.

\textbf{Scaling w.r.t Sequence Length ($L$).}
While standard temporal Transformers scale quadratically with sequence length ($O(L^2)$), our \textit{Inverted Variable Embedding} strategy decouples the computational cost from the temporal window. Figure \ref{fig:complexity} (Left) confirms that Mask2Cause scales \textbf{linearly with history length}. Remarkably, increasing $L$ from 5 to 2000 results in a negligible increase in computational cost (2.05M to 4.60M FLOPs). This efficiency arises because the sequence dimension is projected into a fixed latent vector $d_{model}$ at the input layer; subsequent attention layers operate solely on the variable dimension, independent of $L$.

\textbf{Scalability w.r.t System Size ($N$).}
Self-attention over variables implies $O(N^2)$ complexity. This is reflected in Figure \ref{fig:complexity} (Right) where we evaluate the computational cost for system sizes up to $N=2000$. This behavior occurs because the quadratic attention term ($N^2 \cdot d_{model}$) dominates the position-wise Feed-Forward Networks (FFN) and linear projections, which scale as $O(N \cdot d_{model}^2)$. 

\textbf{Resource Footprint.}
%
Table~\ref{tab:comparative_complexity_with_nl} demonstrates the efficiency of our architecture while exposing fundamental scaling bottlenecks in baseline models. Because cLSTM and CUTS+ are recurrent architectures and employ temporal weight-sharing, their parameter counts remain constant as the sequence length $L$ varies. In contrast, cMLP flattens the history window, causing a huge increase in parameter count as $L$ scales from $5$ to $2000$. While all models inevitably confront an $\mathcal{O}(N^2)$ complexity bound regarding variable count $N$, their failure modes vary. Neural Granger methods (cMLP and cLSTM) become intractable at scale due to excessive parameter growth. CUTS+ remains parameter-efficient via spatial weight-sharing, but its graph-based message-passing mechanism incurs severe computational overhead, evident in its massive FLOP count at $N=2000, L=2000$. Mask2Cause circumvents both limits: it achieves parameter efficiency through shared Transformer weights and computational efficiency via global attention mechanisms. On the standard Lorenz-96 configuration ($N=10, L=5$), Mask2Cause requires only \textbf{0.10M parameters} and \textbf{2.05M FLOPs}, with this relative scaling advantage compounding rapidly at higher values of $N$ and $L$. Figures~\ref{fig:complexity} and~\ref{fig:unified_complexity} support this analysis, confirming the superior scalability of the proposed framework.

\begin{table}[htbp]
\centering
\begin{minipage}{0.48\textwidth}
\centering
\textbf{Configuration: $N=10$, $L=5$} \\ \vspace{0.2cm}
\resizebox{\linewidth}{!}{%
\begin{tabular}{lrr}
\toprule
\textbf{Model} & \textbf{Params (M)} & \textbf{FLOPs (M)} \\
\midrule
cMLP (NGC) & $0.0333$ & $0.13$ \\
cLSTM (NGC) & $0.1952$ & $2.00$ \\
CUTS+ & $0.1623$ & $16.37$ \\
M2C (MSE) & $0.1005$ & $2.05$ \\
M2C (NLL) & $0.1006$ & $2.05$ \\
\bottomrule
\end{tabular}%
}
\end{minipage}\hfill
\begin{minipage}{0.48\textwidth}
\centering
\textbf{Configuration: $N=2000$, $L=5$} \\ \vspace{0.2cm}
\resizebox{\linewidth}{!}{%
\begin{tabular}{lrr}
\toprule
\textbf{Model} & \textbf{Params (M)} & \textbf{FLOPs (M)} \\
\midrule
cMLP (NGC) & $1280.2580$ & $5120.51$ \\
cLSTM (NGC) & $1057.9220$ & $10589.44$ \\
CUTS+ & $25.2530$ & $499939.84$ \\
M2C (MSE) & $4.1004$ & $2510.85$ \\
M2C (NLL) & $4.1005$ & $2511.10$ \\
\bottomrule
\end{tabular}%
}
\end{minipage}

\vspace{0.5cm}

\begin{minipage}{0.48\textwidth}
\centering
\textbf{Configuration: $N=10$, $L=2000$} \\ \vspace{0.2cm}
\resizebox{\linewidth}{!}{%
\begin{tabular}{lrr}
\toprule
\textbf{Model} & \textbf{Params (M)} & \textbf{FLOPs (M)} \\
\midrule
cMLP (NGC) & $12.8013$ & $51.20$ \\
cLSTM (NGC) & $0.1952$ & $801.28$ \\
CUTS+ & $0.1623$ & $6549.68$ \\
M2C (MSE) & $0.2282$ & $4.60$ \\
M2C (NLL) & $0.2283$ & $4.60$ \\
\bottomrule
\end{tabular}%
}
\end{minipage}\hfill
\begin{minipage}{0.48\textwidth}
\centering
\textbf{Configuration: $N=2000$, $L=2000$} \\ \vspace{0.2cm}
\resizebox{\linewidth}{!}{%
\begin{tabular}{lrr}
\toprule
\textbf{Model} & \textbf{Params (M)} & \textbf{FLOPs (M)} \\
\midrule
cMLP (NGC) & $512000.2580$ & $2048000.51$ \\
cLSTM (NGC) & $1057.9220$ & $4235776.00$ \\
CUTS+ & $25.2530$ & $199975936.00$ \\
M2C (MSE) & $4.2281$ & $3021.57$ \\
M2C (NLL) & $4.2282$ & $3021.82$ \\
\bottomrule
\end{tabular}%
}
\end{minipage}
\vspace{3mm}
\caption{Complexity Analysis: Parameters and FLOPs across different configurations.}
\label{tab:comparative_complexity_with_nl}
\end{table}


\begin{figure}[h]
    \centering
    \includegraphics[width=0.8\linewidth]{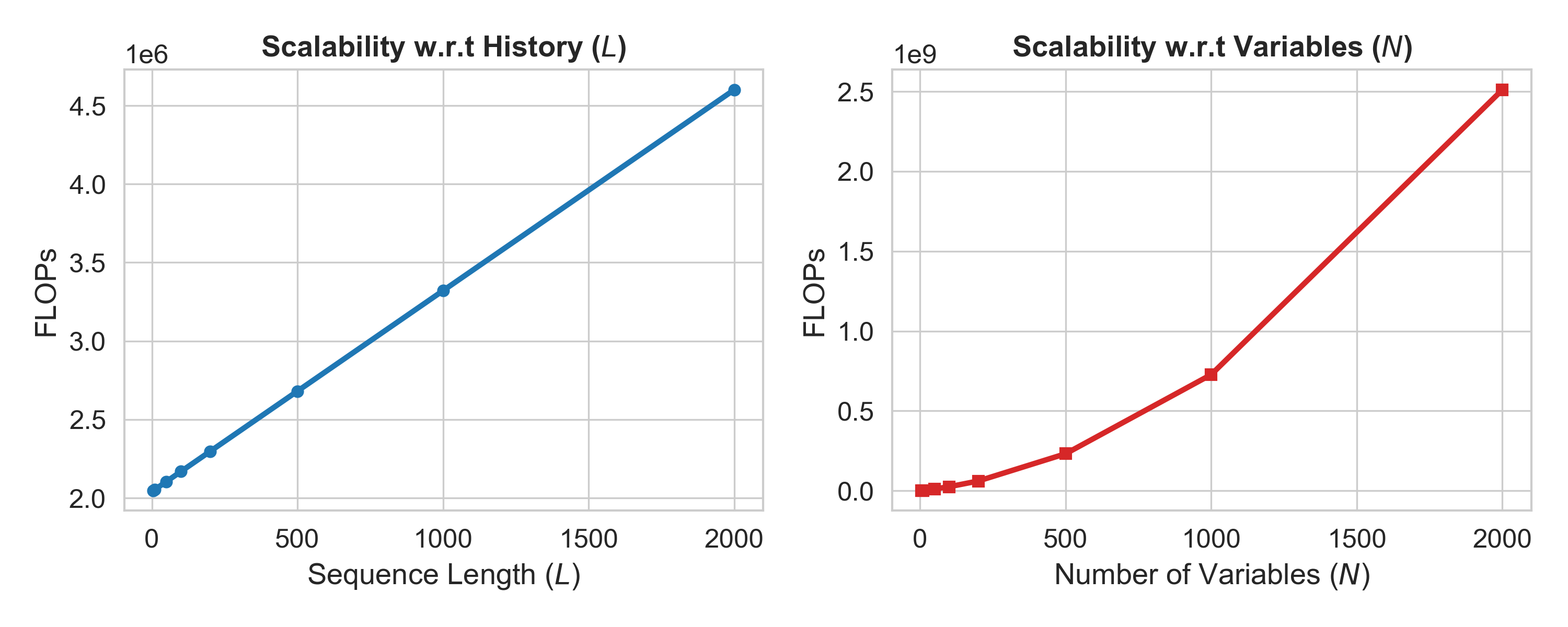} 
    \caption{\textbf{Complexity Scaling (MSE).} (Left) FLOPs vs. Sequence Length $L$. The model is virtually insensitive to the look-back window; increasing history from $L=5$ to $L=2000$ only increases cost from 2.05M to 4.60M FLOPs. (Right) FLOPs vs. Variables $N$. The cost grows significantly with system size, spanning from 1.01M ($N=5$) to nearly 2510.85M ($N=2000$), driven by the variable-wise projections.}
    \label{fig:complexity}
\end{figure}

\begin{figure}[h]
    \centering
    \includegraphics[width=0.8\linewidth]{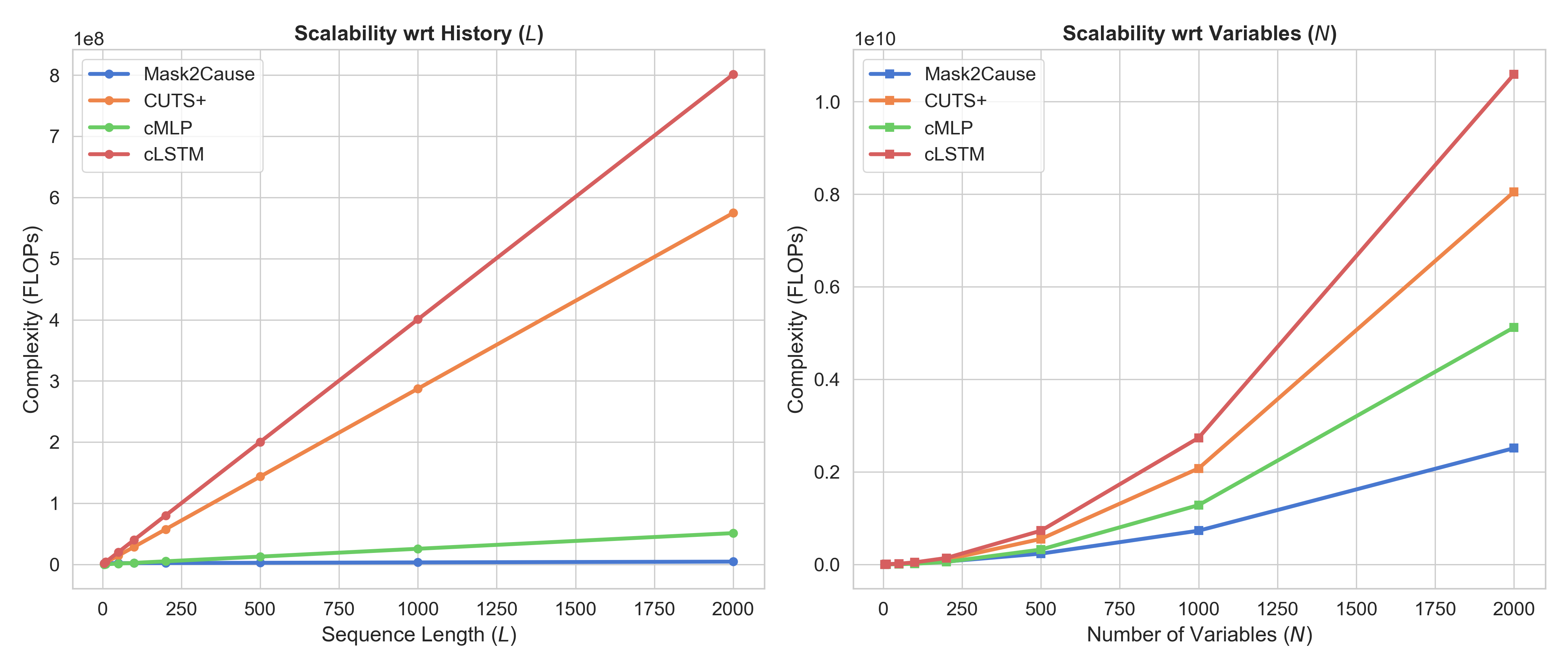} 
    \caption{\textbf{Comparison of Complexity Scaling.} (Left) FLOPs vs. Sequence Length $L$ (with $N=10$ fixed). Mask2Cause demonstrates superior scalability with respect to sequence length; increasing history from $L=5$ to $L=2000$ only increases cost from $2.05$M to $4.60$M FLOPs. In contrast, baselines exhibit significantly steeper linear scaling: cMLP ($0.13$M to $51.20$M), cLSTM ($2.00$M to $801.28$M), and CUTS+ ($16.37$M to $6549.68$M). (Right) FLOPs vs. Variables $N$ (with $L=5$ fixed). While all models exhibit $\mathcal{O}(N^2)$ quadratic scaling, Mask2Cause remains the most computationally efficient. Its cost grows from $1.01$M ($N=5$) to $2510.85$M ($N=2000$), substantially outperforming cMLP ($5120.51$M), cLSTM ($10589.44$M), and CUTS+ ($499939.84$M).}
    \label{fig:unified_complexity}
\end{figure}


\subsection{Hardware and Compute Resources}
\label{compute}
All experiments, including hyperparameter tuning and baseline evaluations, were conducted on a standard workstation equipped with an Intel Core Ultra 9 285H processor and 32 GB of RAM. The approximate execution times per single seed were 20 seconds for VAR, 3 minutes for Lorenz, 5 minutes for Mixed Physics, 15 minutes for DREAM, and 20 minutes for CausalTime.

\section{Thresholding}
\label{thresh}

The primary output of the Mask2Cause architecture is a continuous adjacency probability matrix $\hat{\mathbf{A}} \in [0,1]^{N \times N}$. While continuous evaluation metrics such as AUROC and AUPRC comprehensively evaluate the model's ranking capability across all possible thresholds, downstream applications and discrete metrics (e.g., Structural Hamming Distance, F1 ) require a binarized causal graph. 

\textbf{Density-Matched Thresholding for Benchmarking.}
For standardized benchmark evaluations where the ground truth is accessible, we follow common practice by selecting a threshold that matches the edge density of the predicted graph to the known density of the ground-truth graph (expected edge density). For example, if the true VAR system possesses a known edge density of 30\%, we threshold $\hat{\mathbf{A}}$ to retain exactly the top 30\% of predicted edges.

\textbf{Unsupervised Thresholding for Real-World Application.}
In real-world settings where the expected density is unknown, a threshold must be inferred directly from the model's output. Mask2Cause is particularly well-suited for this scenario because it does not produce ambiguous, uniformly distributed scores. Instead, the combination of our log-gated attention mechanism and the $L_1$ sparsity penalty actively suppresses "lukewarm" connections. 

True causal edges successfully resist the regularization penalty to maintain high activation weights, while non-causal edges are heavily penalized. Consequently, the optimized continuous matrix $\hat{\mathbf{A}}$ exhibits a stark, highly separable bimodal distribution. For example, the average weights for true causal edges versus non-causal edges naturally diverge to $0.514$ vs. $0.164$ on the Lorenz-96 system, and $0.491$ vs. $0.208$ on the VAR system. 

Due to this strong bimodality, practitioners can reliably determine the binarization threshold via simple unsupervised 1D clustering. The clustering algorithm automatically identifies the optimal separation valley between the two score distributions, allowing for accurate discrete graph extraction without expected density.
\section{Extended Evaluation Metrics: AUPRC, SHD, and F1}
\label{metrics}
While AUROC serves as a standard measure for causal discovery, the Area Under the Precision-Recall Curve (AUPRC) provides a highly sensitive metric for evaluating false positives. A high AUPRC confirms that the model does not merely rank true edges marginally higher than false ones, but rather produces stark, confident margins separating causal from non-causal dependencies in the continuous adjacency probability matrix $\hat{\mathbf{A}}$.

While AUROC and AUPRC are calculated by sweeping a continuous threshold across $\hat{\mathbf{A}}$, evaluating Structural Hamming Distance (SHD) and Precision/Recall requires a discrete graph. To construct this discrete graph, we performed thresholding as described in the Appendix \ref{thresh}

As demonstrated in Table \ref{tab:extended_metrics}, Mask2Cause exhibits exceptionally strong performance across all metrics. On the synthetic benchmarks (VAR, Lorenz-96), the AUPRC nearly perfectly matches the AUROC (frequently achieving a flawless $1.000$), validating the model's high confidence. Furthermore, the model maintains robust precision-recall scores even on the highly complex, realistic proxies within the CausalTime suite (Medical, AQI, Traffic).

\begin{table}[h]
\centering
\caption{Extended Evaluation Metrics for Mask2Cause across Synthetic and Real-World Benchmarks}
\label{tab:extended_metrics}
\begin{tabular}{lcccc}
\toprule
\textbf{Dataset Configuration} & \textbf{AUROC} & \textbf{AUPRC} & \textbf{SHD} & \textbf{F1} \\
\midrule
Lorenz-96 ($F=10$), $T=500$    & 0.9967         & 0.9956         & 2            & 0.975 \\
Lorenz-96 ($F=40$), $T=500$    & 1.0000         & 1.0000         & 0            & 1.000 \\
Lorenz-96 ($F=10$), $T=250$    & 0.9950         & 0.9929         & 6            & 0.925 \\
Lorenz-96 ($F=40$), $T=250$    & 0.9987         & 0.9981         & 2            & 0.975 \\
VAR, $T=1000$                  & 1.0000         & 1.0000         & 0            & 1.000 \\
VAR, $T=500$                   & 1.0000         & 1.0000         & 0            & 1.000 \\
\midrule
Medical                        & 0.9030         & 0.8837         & 74           & 0.758 \\
AQI                            & 0.8480         & 0.6862         & 52           & 0.644 \\
Traffic                        & 0.6810         & 0.5355         & 54           & 0.671 \\
\bottomrule
\end{tabular}
\end{table}

\section{Scalability on High-Dimensional Systems}
\label{scaling}

In the main text experiments (Section 5), we evaluated Mask2Cause on synthetic systems (VAR and Lorenz-96) with $N=10$ variables. This dimension was selected to mirror the established experimental protocols of our primary baselines (e.g., eSRU \cite{Khanna2020Economy} and CUTS \cite{cheng2023cuts}), ensuring a fair and direct comparison. As shown in Table \ref{tab: lorvar}, Mask2Cause achieves near-perfect recovery in this regime.

To rigorously evaluate the model's structural recovery capabilities at scale and explicitly address the ceiling effects observed at smaller dimensions, we extended our evaluation of the Mask2Cause(MSE) variant to $N=20$ and $N=100$ systems (with $T=1000$). 

As detailed in Table \ref{tab:high_dim_scaling}, the metric ceiling effect persists at $N=20$, where the model still easily recovers the underlying dynamics. However, scaling to $N=100$ expands the optimization search space exponentially from 100 to 10,000 potential causal links. This explicitly stresses the adjacency-constrained encoder. Despite this massive increase in structural complexity, Mask2Cause remains highly robust, maintaining an AUROC between 0.86 and 0.93 across both linear and chaotic continuous-time dynamics. Table \ref{tab:results_baselines_t1000_p} presents results of evaluation of other competitive baselines (CUTS+, cMLP, cLSTM) on high-dimensional VAR and Lorenz-96 datasets.

\begin{table}[htbp]
    \centering
    \small
    \caption{\textbf{Mask2Cause Scaling Performance.} Evaluation of the MSE variant on high-dimensional VAR and Lorenz-96 systems ($T=1000$).}
    \label{tab:high_dim_scaling}
    \begin{tabular}{lcccc}
        \toprule
        \textbf{Dataset} & \textbf{System Size ($N$)} & \textbf{AUROC}  & \textbf{AUPRC}  \\
        \midrule
        Lorenz-96 ($F=10$) & 10 & $1.000 \pm 0.000$ & $1.000 \pm 0.000$\\
        Lorenz-96 ($F=40$) & 10 & $1.000 \pm 0.000$ & $1.000\pm0.000$ \\
        VAR & 10 & $1.000 \pm 0.000$ & $1.000 \pm 0.000$ \\
        \midrule
        Lorenz-96 ($F=10$) & 20 & $1.000 \pm 0.000$ & $1.000 \pm 0.000$\\
        Lorenz-96 ($F=40$) & 20 & $1.000 \pm 0.000$ & $0.998 \pm 0.001$ \\
        VAR & 20 & $0.997 \pm 0.001$ & $0.987 \pm 0.004$\\
        \midrule
        Lorenz-96 ($F=10$) & 100 & $0.930 \pm 0.051$ & $0.867 \pm 0.094$\\
        Lorenz-96 ($F=40$) & 100 & $0.869 \pm 0.047$ & $0.740 \pm 0.080$\\
        VAR & 100 & $0.867\pm0.078$ & $0.680 \pm 0.095$\\
        \bottomrule
    \end{tabular}
\end{table}

\begin{table*}[htbp]
\centering
\caption{\textbf{Baseline Scaling Performance.} Evaluation of competitive baselines on high-dimensional VAR and Lorenz-96 systems ($T=1000$) (Mean $\pm$ Std)}
\label{tab:results_baselines_t1000_p}
\resizebox{\textwidth}{!}{
\begin{tabular}{ll cccccc}
\toprule
& & \multicolumn{2}{c}{\textbf{LORENZ-96 (F=10)}} & \multicolumn{2}{c}{\textbf{LORENZ-96 (F=40)}} & \multicolumn{2}{c}{\textbf{VAR}} \\
\cmidrule(lr){3-4} \cmidrule(lr){5-6} \cmidrule(lr){7-8}
\textbf{Metric} & \textbf{Model} & $N=20$ & $N=100$ & $N=20$ & $N=100$ & $N=20$ & $N=100$ \\
\midrule
\multirow{3}{*}{\textbf{AUROC}} 
& CUTS+ & $0.837 \pm 0.013$ & $0.781 \pm 0.012$ & $0.941 \pm 0.026$ & $0.826 \pm 0.041$ & $0.880 \pm 0.015$ & $0.601 \pm 0.122$ \\
& cMLP  & $0.982 \pm 0.007$ & $0.850 \pm 0.042$ & $0.992 \pm 0.004$ & $0.798 \pm 0.006$ & $0.540 \pm 0.025$ & $0.506 \pm 0.004$ \\
& cLSTM & $0.719 \pm 0.019$ & $0.522 \pm 0.021$ & $0.698 \pm 0.003$ & $0.533 \pm 0.011$ & $0.641 \pm 0.016$ & $0.508 \pm 0.006$ \\
\midrule
\multirow{3}{*}{\textbf{AUPRC}} 
& CUTS+ & $0.625 \pm 0.054$ & $0.392 \pm 0.113$ & $0.875 \pm 0.036$ & $0.560 \pm 0.142$ & $0.760 \pm 0.039$ & $0.397 \pm 0.144$ \\
& cMLP  & $0.939 \pm 0.020$ & $0.326 \pm 0.085$ & $0.974 \pm 0.007$ & $0.201 \pm 0.003$ & $0.277 \pm 0.007$ & $0.292 \pm 0.001$ \\
& cLSTM & $0.377 \pm 0.076$ & $0.033 \pm 0.001$ & $0.307 \pm 0.042$ & $0.033 \pm 0.000$ & $0.347 \pm 0.016$ & $0.295 \pm 0.006$ \\
\bottomrule
\end{tabular}
}
\end{table*}

\section{Practical Guidelines: Choosing Between Homoscedastic (MSE) and Heteroscedastic (NLL) Objectives}
\label{guide}

To contextualize the empirical results presented in the main text, it is important to note that the underlying data-generating processes for the VAR, Lorenz-96, CausalTime, and DREAM3 benchmarks are inherently homoscedastic. Therefore, the Mean Squared Error (MSE) objective is usually optimal for these systems. 

The introduction of the Negative Log-Likelihood (NLL) objective—which jointly estimates a time-varying variance—carries a trade-off between representational capacity and parameter efficiency. On highly complex datasets with scarce samples, such as the Yeast3 network (which features up to 24 regulators per gene), the variance estimation can destabilize, degrading the structural learning signal. Conversely, on simpler DREAM3 datasets, the NLL objective occasionally outperforms MSE. In these specific regimes, the dynamic variance acts as an adaptive regularizer, intelligently down-weighting noisy or highly volatile transitions during gradient descent.

When deploying Mask2Cause on a novel, real-world dataset where the ground truth is unknown, we recommend the following decision framework:

\begin{itemize}
    \item \textbf{Data Scarcity:} If the dataset is highly constrained in the time dimension ($T$) relative to the number of variables ($N$), default to the \textbf{MSE} objective. The parameter efficiency and optimization stability provided by a fixed-variance assumption will generally outweigh the benefits of adaptive variance modeling.
    \item \textbf{Rolling Variance Check:} For appropriately sized datasets, practitioners should pre-compute the rolling variance of the target variables. If the variance appears stationary across time, default to \textbf{MSE}. However, if the time series exhibits distinct volatility clustering or heteroscedastic shocks (e.g., high-frequency financial econometrics, neurological seizure data), deploy the \textbf{NLL} objective.
\end{itemize}

\section{Implementation Details and Hyperparameter Analysis}
\label{C}
\subsection{Data Sources and Evaluation Setup}
\label{evalset}
We utilize standard dataset implementations sourced directly from the official repository of the eSRU \cite{Khanna2020Economy} framework (for VAR, Lorenz-96, and DREAM3) and the official CausalTime \cite{chengcausaltime} website.

To ensure fair comparison and reproducibility, we adopt specific evaluation strategies tailored to the nature of each benchmark:

\textbf{1. Synthetic Generative Systems (VAR, Lorenz-96)}
We strictly separate hyperparameter tuning from final evaluation to prevent leakage. For each physical configuration (e.g., forcing constant $F$), we use the available 6 independent dataset realizations using distinct random seeds. Seed 0 is used exclusively as a \textit{Calibration Set} for hyperparameter tuning. Seeds 1--5 are reserved as \textit{Test Sets}. We report the mean and standard deviation of the AUROC computed across the 5 independent Test Sets.

\textbf{2. CausalTime (Static Real-World Proxies)}
For the fixed CausalTime datasets, where new samples cannot be generated, we employ a chronological split, using the first 20\% of the data for tuning. To account for the variance inherent in neural network initialization, we train the model 5 times on the same dataset using different random seeds for weight initialization. Results are reported as the mean and standard deviation across these 5 runs.

\textbf{3. DREAM3 (Gene Regulatory Networks)}
Consistent with the evaluation protocol of the baselines we compare against (which utilize the single fixed dataset provided by the challenge), we do not perform multi-seed averaging for this benchmark. We use the first 20\% of the data for tuning. We report the final AUROC from the single best model found after tuning on the validation set, matching the reporting style of the cited baselines.

\textbf{4. Mixed Physics} Consistent with the protocol employed for DREAM3, we treat the Mixed Physics benchmark as a fixed dataset challenge.

\textbf{Baseline Configurations.}
For baseline methods that we ran locally (rather than quoting from published literature), we strictly utilized the hyperparameter configurations specified for each respective benchmark in their original publications. If that is not available (for the case of mixed physics), we tune the hyper-parameters ourselves following the guidelines established by the respective papers.

\subsection{Architectural Configurations}
\textbf{Fixed Parameters.}
Through preliminary ablation on the Calibration Sets, we identified a globally robust architecture that performs well across diverse dynamical regimes. We fix the number of encoder layers to $M=2$ and the number of attention heads to $H=4$. Deeper architectures were observed to overfit the noise in sparse causal discovery tasks. The embedding dimension usually $d_{model}$ set to 64 by default, but reduced to 32 or 16 for datasets with extremely short sequences to strictly constrain model capacity.

\textbf{Diagonal Forcing.}
The diagonal forcing parameter controls the prior on self-loops. For autoregressive systems (VAR, Lorenz), a variable's immediate history is its strongest predictor. We force the diagonal logits to $+100$ to ensure the model always attends to itself, stabilizing the learning of cross-variable interactions. For the DREAM3 benchmark, the standard evaluation protocol explicitly excludes self-regulation edges from the ground truth. So, we remove the diagonal component from the predicted adjacency matrix before evaluating. Because of this we are no longer forced to keep the diagonals open, allowing us to include "Diag Force" for tuning, taking values \{-100, 0, 100\}. 

\subsection{Optimal Hyperparameters}
We performed a grid search on the Calibration/Validation sets to determine the optimal learning rate (`lr'), sparsity penalty ($\lambda$), and training duration (`epochs'). The selected configurations for our Homoscedastic (MSE) and Heteroscedastic (NLL) models are detailed below.

\begin{table}[h]
    \centering
    \caption{\textbf{Optimal Hyperparameters for M2C-MSE}}
    \label{tab:params_mse}
    \resizebox{0.9\textwidth}{!}{
        \begin{tabular}{lccccccc}
            \toprule
            \textbf{Dataset} & \textbf{Learning Rate} & \textbf{Batch Size} & \textbf{Seq Len} & \textbf{$d_{model}$} & \textbf{$\lambda$} & \textbf{Diag Force} & \textbf{Epochs} \\
            \midrule
            VAR ($T=500$) & 0.001 & 32 & 3 & 64 & 0.01 & +100 & 10\\
            VAR ($T=1000$) & 0.001 & 32 & 3 & 64 & 0.01 & +100 & 5\\
            Lorenz-96 ($F=10, T = 250$ ) & 0.01 & 32 & 1 & 32 & 0.02 & +100 & 200\\
            Lorenz-96 ($F=10, T = 500$ ) & 0.001 & 32 & 1 & 64 & 0.02 & +100 & 150\\
            Lorenz-96 ($F=40, T = 250$) & 0.01 & 32 & 1 & 64 & 0.02 & +100 & 150\\
            Lorenz-96 ($F=40, T = 500$) & 0.001 & 32 & 1 & 64 & 0.02 & +100 & 150\\
            DREAM3 (E.coli-1) & 0.001 & 32 & 5 & 32 & 0.5 & -100 & 20\\
            DREAM3 (E.coli-2) & 0.001 & 32 & 5 & 32 & 0.005 & +100 & 65\\
            DREAM3 (Yeast-1) & 0.001 & 16 & 5 & 64 & 0.0005 & +100 & 35\\
            DREAM3 (Yeast-2) & 0.001 & 32 & 5 & 32 & 0.001 & 0 & 20\\
            DREAM3 (Yeast-3) & 0.001 & 16 & 5 & 64 & 0.5 & -100 & 20\\
            CausalTime (Traffic) & 0.01 & 32 & 3 & 64 & 0.5 & +100 & 10\\
            CausalTime (AQI) & 0.01 & 32 & 3 & 64 & 1.0 & +100 & 10\\
            CausalTime (Medical) & 0.01 & 32 & 3 & 64 & 1.0 & +100 & 25\\
            MixedPhysics (50:50) & 0.001 & 32 &3  & 32 & 0.01 & +100 &35 \\
            MixedPhysics (75:25) &0.0001  & 64 & 3 & 16 &  0.01& +100 & 5\\
            MixedPhysics (100:0) &0.001  & 16 & 3 & 16 & 0.001 & +100 & 5\\
            \bottomrule
        \end{tabular}
    }
\end{table}

\begin{table}[h]
    \centering
    \caption{\textbf{Optimal Hyperparameters for M2C-NLL}}
    \label{tab:params_mse}
    \resizebox{0.9\textwidth}{!}{
        \begin{tabular}{lccccccc}
            \toprule
            \textbf{Dataset} & \textbf{Learning Rate} & \textbf{Batch Size} & \textbf{Seq Len} & \textbf{$d_{model}$} & \textbf{$\lambda$} & \textbf{Diag Force} & \textbf{Epochs} \\
            \midrule
            VAR ($T=500$) & 0.001 & 32 & 3 & 64 & 0.01 & +100 & 10\\
            VAR ($T=1000$) & 0.001 & 32 & 3 & 64 & 0.01 & +100 & 5\\
            Lorenz-96 ($F=10, T = 250$ ) & 0.01 & 32 & 1 & 32 & 0.02 & +100 & 185\\
            Lorenz-96 ($F=10, T = 500$ ) & 0.001 & 32 & 1 & 64 & 0.02 & +100 & 140\\
            Lorenz-96 ($F=40, T = 250$) & 0.01 & 32 & 1 & 64 & 0.02 & +100 & 150\\
            Lorenz-96 ($F=40, T = 500$) & 0.001 & 32 & 1 & 32 & 0.02 & +100 & 30\\
            DREAM3 (E.coli-1) & 0.001 & 32 & 5 & 64 & 0.01 & -100 & 20\\
            DREAM3 (E.coli-2) & 0.001 & 32 & 5 & 64 & 1.00 & +100 & 15\\
            DREAM3 (Yeast-1) & 0.0001 & 32 & 5 & 64 & 0.001 & +100 & 35\\
            DREAM3 (Yeast-2) & 0.0001 & 32 & 5 & 64 & 0.001 & +100 & 15\\
            DREAM3 (Yeast-3) & 0.001 & 32 & 5 & 64 & 1.00 & +100 & 20\\
            CausalTime (Traffic) & 0.01 & 32 & 3 & 64 & 0.001 & +100 & 15\\
            CausalTime (AQI) & 0.01 & 32 & 3 & 64 & 0.5 & +100 & 10\\
            CausalTime (Medical) & 0.01 & 32 & 3 & 64 & 0.5 & +100 & 25\\
            MixedPhysics (50:50) & 0.001 & 32 &3  & 32 & 0.01 & +100 & 35\\
            MixedPhysics (75:25) &0.0001  & 64 & 3 & 16 &  0.01& +100 & 5\\
            MixedPhysics (100:0) &0.001  & 16 & 3 & 16 & 0.001 & +100 & 5\\
            \bottomrule
        \end{tabular}
    }
\end{table}

\subsection{Sensitivity Analysis}
\label{sec:sensitivity}

To assess the robustness of Mask2Cause, we analyze the sensitivity of the AUROC metric to variations in four key hyperparameters: Sparsity Penalty ($\lambda_{L1}$), Sequence Length/Look back window ($L$), Latent Dimension ($d_{model}$), and Diagonal Forcing. We vary one parameter at a time while fixing the others to their optimal settings on the Lorenz-96 ($F=10$) calibration set.

\begin{figure}[h]
    \centering
    \begin{minipage}{0.48\textwidth}
        \centering
        \includegraphics[width=\linewidth]{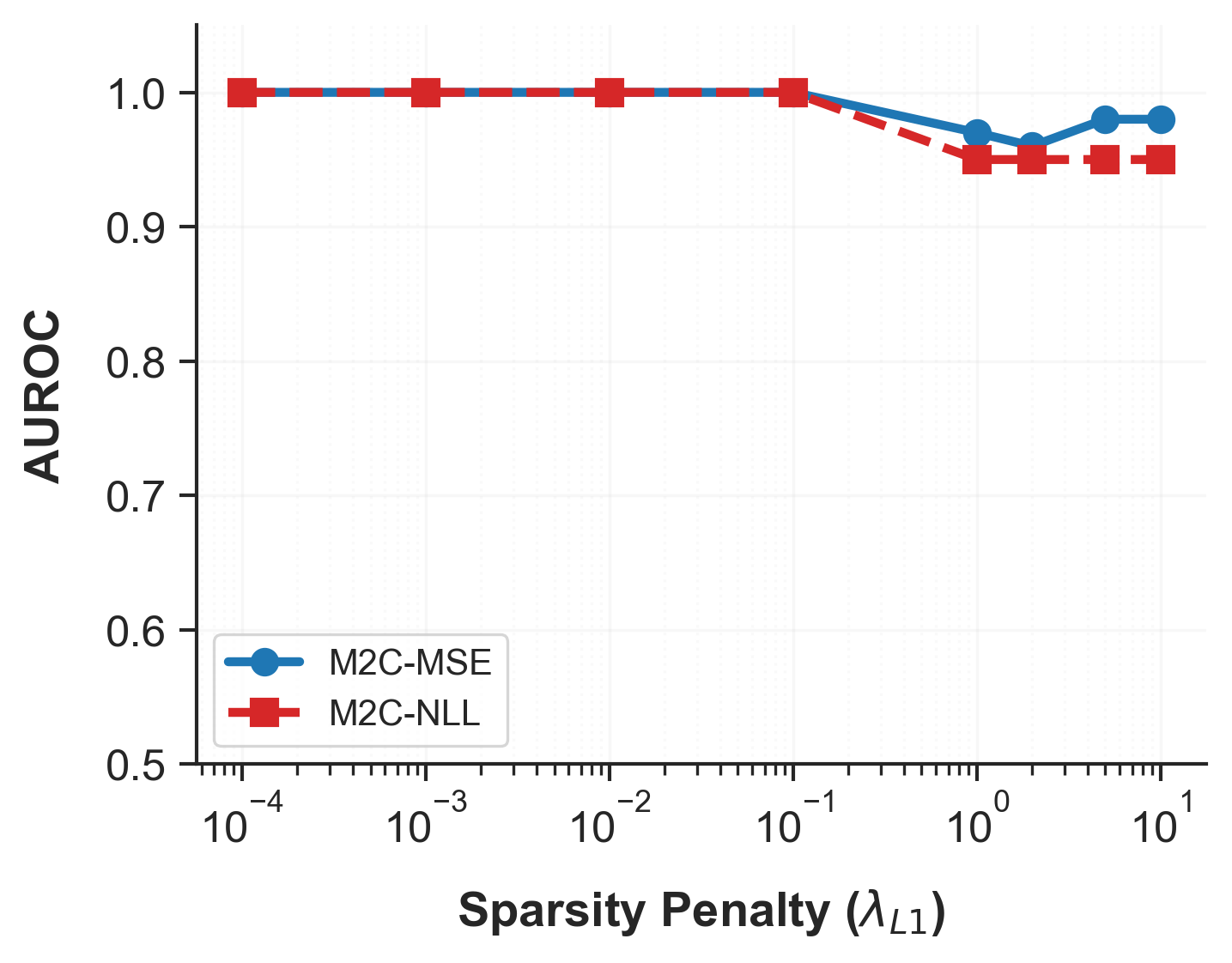} 
        \caption*{(a) Sparsity Penalty $\lambda_{L1}$}
    \end{minipage}
    \hfill
    \begin{minipage}{0.48\textwidth}
        \centering
        \includegraphics[width=\linewidth]{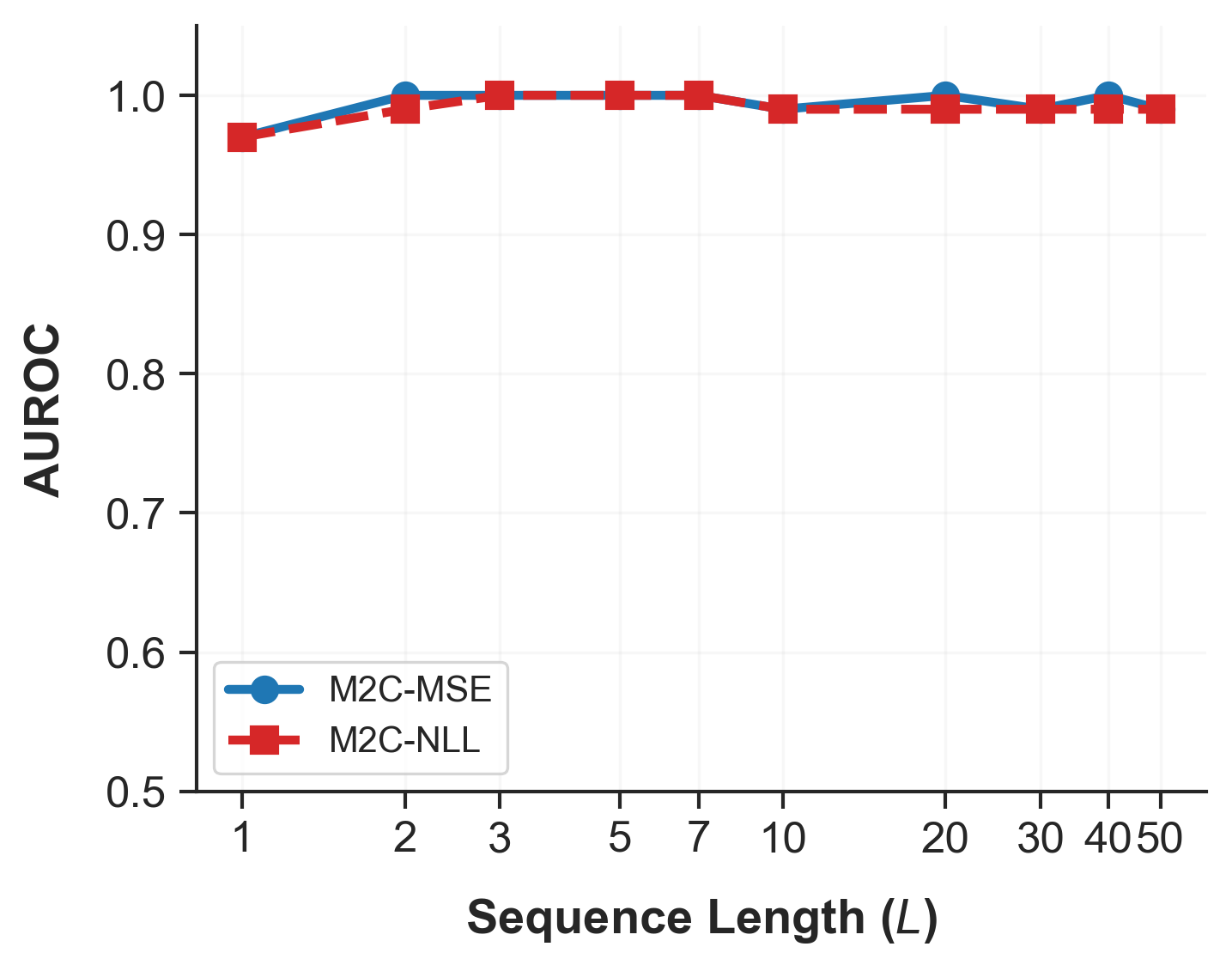} 
        \caption*{(b) Sequence Length $L$}
    \end{minipage}
    
    \vspace{1em}
    
    \begin{minipage}{0.48\textwidth}
        \centering
        \includegraphics[width=\linewidth]{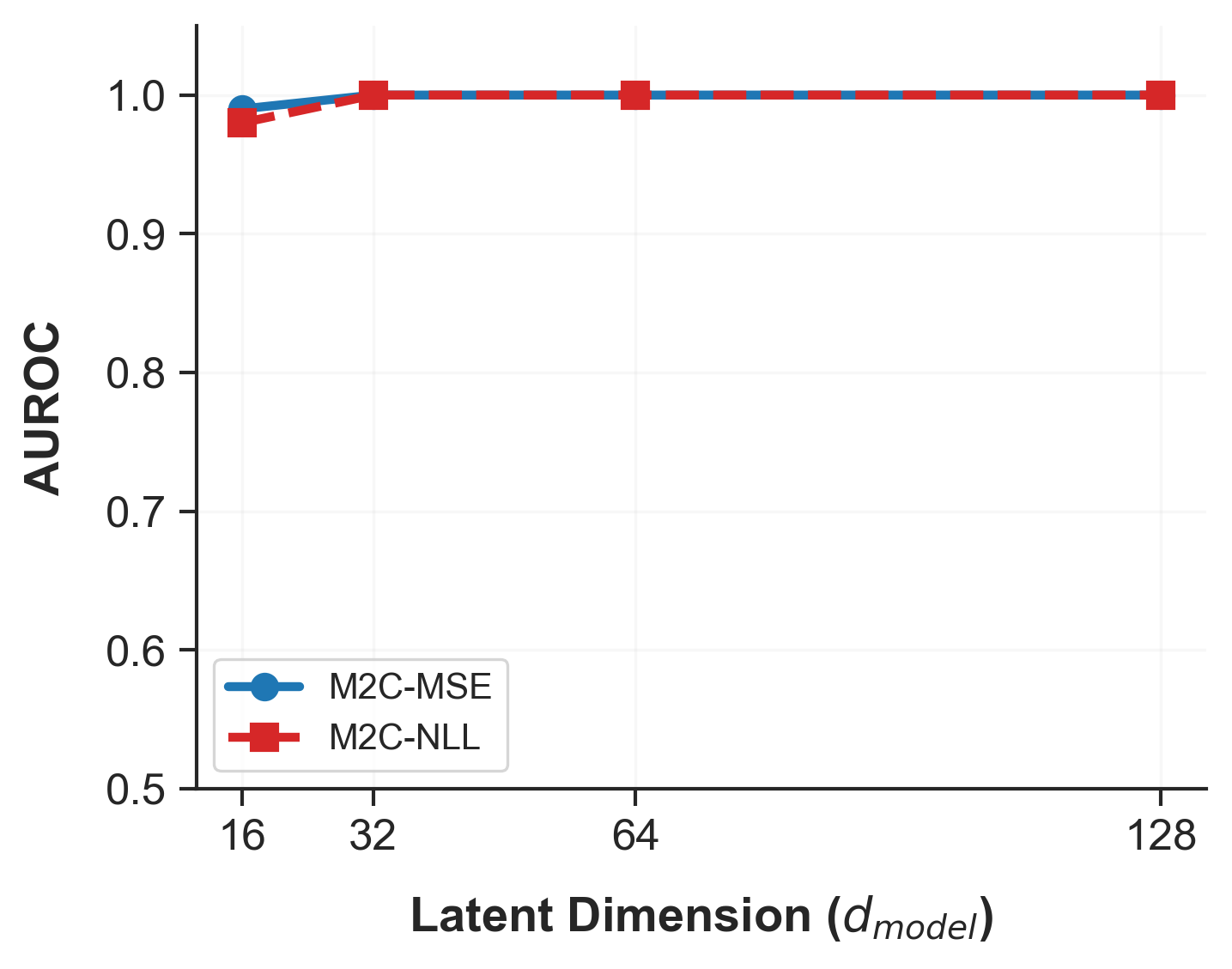} 
        \caption*{(c) Latent Dimension $d_{model}$}
    \end{minipage}
    \hfill
    \begin{minipage}{0.48\textwidth}
        \centering
        \includegraphics[width=\linewidth]{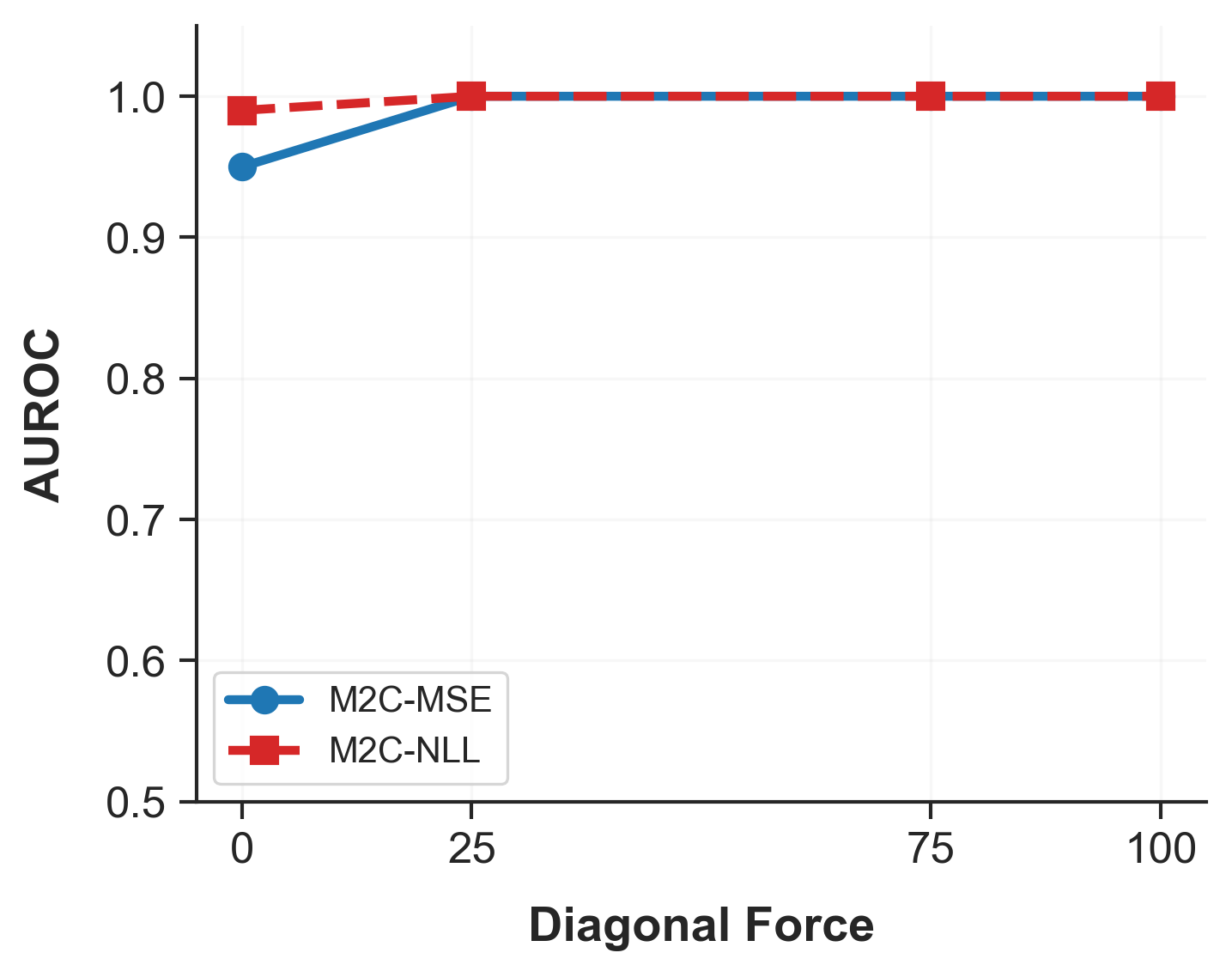} 
        \caption*{(d) Diagonal Forcing}
    \end{minipage}
    
    \caption{\textbf{Hyperparameter Sensitivity Analysis.} The model exhibits high robustness across wide ranges of hyperparameters. Notably, (b) shows that performance remains near-perfect even as sequence length increases to $L=50$, demonstrating the embedding's ability to prioritize relevant recent history.}
    \label{fig:sensitivity}
\end{figure}

\textbf{Observations:}
\begin{itemize}
    \item \textbf{Sparsity Penalty ($\lambda$):} As shown in Panel (a), the model is exceptionally robust to the regularization strength, maintaining perfect recovery (AUROC 1.00) for $\lambda \in [10^{-4}, 10^{-1}]$. A slight degradation ($0.98$) is observed only when $\lambda \ge 1$, where the penalty begins to over-prune weak but true causal signals.
    
    \item \textbf{Sequence Length ($L$):} Panel (b) highlights a critical property of the Inverted Variable Embedding. The true lag of the system is small ($K \approx 3$). Remarkably, even when the input window is extended to $L=50$, the performance does not degrade ($0.99-1.00$). This confirms that the variable-specific tokenization effectively learns to attend to the relevant immediate history (the first few time steps) while suppressing noise from the distant past, without requiring manual lag selection.
    
    \item \textbf{Latent Dimension ($d_{model}$):} As seen in Panel (c), the model is insensitive to the hidden dimension size, achieving AUROC $\ge 0.99$ for $d \in [16, 128]$. This suggests that the causal graph structure is the primary information bottleneck, and even compact representations are sufficient to capture the dynamics.

    \item \textbf{Diagonal Forcing:} Panel (d) demonstrates that our model does not rely on diagonal forcing as a prerequisite for success. Crucially, even in the complete absence of this prior (Force = 0), the model maintains high recovery accuracy (AUROC $\ge$ 0.95 for MSE model, 0.99 for NLL model). This result confirms that diagonal forcing is not a ``cheat code'' or structural crutch; rather, the model is fully capable of autonomously learning autoregressive dependencies from scratch, with the prior serving merely to accelerate convergence in known autoregressive settings.
\end{itemize}

\section{Forecasting using Causal Pruning}
\label{forecasting}

\subsection{Model-Specific Pruning Mechanisms}
Causal Pruning offers a benefit of reducing architectural complexity from $O(N)$ to $O(|P(i)|)$ per node, with implementation strategies specific to each model class. For neural architectures such as MLP and N-BEATS, pruning is enforced by slicing the input tensors so that instead of feeding a full $N$-dimensional vector to the first hidden layer, the model only receives indices corresponding to $P(i)$. In the case of ARIMAX, the set $P(i)$ is treated as the collection of exogenous regressors, where masking the exogenous input matrix to include only causal parents allows the model to avoid estimating coefficients for non-causal variables. Similarly, Linear Regression (VAR) is formulated as a sparse Ordinary Least Squares (OLS) problem, ensuring that coefficients are estimated exclusively for variables within the causal neighborhood $P(i)$.

\subsection{Comparative Analysis with CUTS+ Baseline}
\label{CUTS+ forecasting appendix}
Table \ref{tab:relative_m2c_vs_cp_final} compares the performance of different forecasting models for one-step-ahead prediction task by performing causal pruning using causal masks from \textbf{Mask2Cause} and \textbf{CUTS+} baseline. We observe that for DREAM3 and Lorenz96 datasets, Mask2Cause provides causal matrices that have a better inductive bias. For VAR dataset, we see a slight drop in accuracy which is compensated by lower computational complexity.

\begin{table}[htbp]
    \centering
    \caption{Relative Improvement of Mask2Cause over Cuts+ Baseline (\%) in MSE Reduction across Datasets}
    \label{tab:relative_m2c_vs_cp_final}
    \scriptsize 
    \setlength{\tabcolsep}{3.5pt} 
    \renewcommand{\arraystretch}{0.75} 
    \begin{tabular}{lcccccccc}
        \toprule
        & \multicolumn{5}{c}{\textbf{DREAM3}} & \multicolumn{2}{c}{\textbf{Lorenz}} & \textbf{VAR} \\
        \cmidrule(lr){2-6} \cmidrule(lr){7-8} \cmidrule(lr){9-9}
        \textbf{Model} & \textbf{Ecoli1} & \textbf{Ecoli2} & \textbf{Yeast1} & \textbf{Yeast2} & \textbf{Yeast3} & \textbf{F=10} & \textbf{F=40} & \textbf{T=500} \\
        \midrule
        ARIMAX      & +2.25  & +4.16  & +0.67  & +2.57  & +2.74  & +11.40 & -1.92  & -0.38 \\
        MLP         & +6.31  & +9.70  & +6.31  & +8.02  & +7.93  & +20.66 & +11.70 & -0.52 \\
        N-BEATS     & +8.19  & +10.79 & +6.36  & +7.82  & +10.34 & +31.24 & +20.59 & -0.62 \\
        Linear Reg. & +2.14  & +4.08  & +0.82  & +2.61  & +2.93  & +9.49  & -1.33  & -0.20 \\
        \bottomrule
    \end{tabular}
\end{table}

\subsection{Comparison with other Feature Selection Methods} 
\label{Comparison with other Feature Selection Methods}
Table \ref{tab:comprehensive_pruning_results} provides details of various feature selection mechanisms (Causal Pruning, Mutual Information and Linear Granger Causality) evaluated on parameter reduction and gain in mean square error for common forecasting models.
We observe that Causal Pruning, which is a zero-shot byproduct of the Mask2Cause framework, demonstrates competitive results against alternative feature selection methods.

\begin{table*}[h]
\centering
\caption{Model-wise Comparison of Parameter Reduction (PR, \%) and MSE Reduction (MSE-R, \%) across Pruning Methods. \textbf{Bold} indicates best performer, \underline{underline} indicates second best.}
\label{tab:comprehensive_pruning_results}
\setlength{\tabcolsep}{4pt} 
\resizebox{\textwidth}{!}{
\begin{tabular}{l c c c c c c c c c c c c}
\toprule
& \multicolumn{4}{c}{\textbf{VAR}} & \multicolumn{4}{c}{\textbf{Lorenz} ($F=10$)} & \multicolumn{4}{c}{\textbf{Lorenz} ($F=40$)} \\
\cmidrule(lr){2-5} \cmidrule(lr){6-9} \cmidrule(lr){10-13}
\textbf{Model / Method} & \multicolumn{2}{c}{\textbf{T=500}} & \multicolumn{2}{c}{\textbf{T=1000}} & \multicolumn{2}{c}{\textbf{T=250}} & \multicolumn{2}{c}{\textbf{T=500}} & \multicolumn{2}{c}{\textbf{T=250}} & \multicolumn{2}{c}{\textbf{T=500}} \\
\cmidrule(lr){2-3} \cmidrule(lr){4-5} \cmidrule(lr){6-7} \cmidrule(lr){8-9} \cmidrule(lr){10-11} \cmidrule(lr){12-13}
& PR & MSE-R & PR & MSE-R & PR & MSE-R & PR & MSE-R & PR & MSE-R & PR & MSE-R \\
\midrule
\textbf{MLP} \\
\quad M2C (Ours) & \textbf{60.01} & -4.41 & \underline{58.60} & -4.06 & \textbf{50.84} & \textbf{+49.35} & \textbf{50.04} & \textbf{+27.61} & \underline{48.37} & \textbf{+10.33} & \underline{48.87} & \textbf{+0.55} \\
\quad Mutual Info. & \underline{54.19} & \underline{-1.74} & \textbf{59.43} & \underline{-3.41} & 0.83 & +4.36 & 1.16 & +1.09 & 33.08 & \underline{-3.21} & 46.71 & -16.74 \\
\quad Linear Granger & 20.45 & \textbf{+0.85} & 9.00 & \textbf{+0.53} & \underline{26.46} & \underline{+27.78} & \underline{18.62} & \underline{+22.25} & \textbf{59.18} & -9.61 & \textbf{51.70} & \underline{-10.52} \\
\midrule
\textbf{ARIMAX} \\
\quad M2C (Ours) & \textbf{72.20} & -5.11 & \underline{70.50} & -5.62 & \textbf{61.17} & \underline{+2.41} & \textbf{60.20} & \textbf{+2.27} & \underline{58.20} & +1.13 & \underline{58.80} & \textbf{+1.27} \\
\quad Mutual Info. & \underline{65.20} & \underline{-2.88} & \textbf{71.50} & \underline{-3.85} & 1.00 & -0.24 & 1.40 & -0.91 & 39.80 & \underline{+3.01} & 56.20 & +0.37 \\
\quad Linear Granger & 24.60 & \textbf{+0.45} & 10.83 & \textbf{+0.30} & \underline{31.83} & \textbf{+2.42} & \underline{22.40} & \underline{-0.33} & \textbf{71.20} & \textbf{+3.18} & \textbf{62.20} & \underline{+1.02} \\

\midrule
\textbf{N-BEATS} \\
\quad M2C (Ours) & \underline{4.8} & \textbf{+10.72} & \underline{4.7} & \textbf{+1.13} & \textbf{7.9} & \textbf{+20.67} & \textbf{7.9} & \underline{+5.18} & \underline{7.6} & \textbf{-11.26} & \underline{7.6} & \textbf{-27.67} \\
\quad Mutual Info. & \textbf{8.47} & \underline{+5.68} & \textbf{9.28} & -0.69 & 0.13 & -7.82 & 0.18 & -15.73 & 5.17 & \underline{-58.71} & 7.30 & -103.78 \\
\quad Linear Granger & 3.19 & +3.34 & 1.41 & \underline{+0.92} & \underline{4.13} & \underline{+13.35} & \underline{2.91} & \textbf{+19.04} & \textbf{9.24} & -76.56 & \textbf{8.08} & \underline{-92.93} \\
\bottomrule
\end{tabular}
}
\end{table*}

\section{Proofs}
\label{proof}
\subsection{Neural Granger Causality is a Special Case of Directed Information}
\label{proof1}

\textbf{Theorem} 
 Let the system evolve according to a Structural Equation Model with Additive Noise: $x^i_t = g_i(\mathbf{x}_{<t}) + \epsilon^i_t$, where $\epsilon^i_t$ is independent, homoscedastic noise. If $X^j$ is functionally non-causal for $X^i$ (i.e., $g_i$ is invariant to $x^j_{<t}$), then Causally Conditioned Directed Information $I(X^j \to X^i \mid \mathbf{X}^{-\{{i,j}\}})$ is exactly zero.

\begin{proof}

The Causally Conditioned Directed Information (DI) from a source $X^j$ to a target $X^i$ is defined as the expected sum of Kullback-Leibler (KL) divergences between the full conditional distribution and the restricted conditional distribution:
\begin{equation}
    I(X^j \to X^i \mid \mathbf{X}^{-\{{i,j}\}})
    = \sum_{t=1}^T \mathbb{E} \Big[
    D_{\mathrm{KL}} \big(
    P(x^i_t \mid \mathbf{x}_{<t}) \,\big\| \,
    P(x^i_t \mid \mathbf{x}^{i}_{<t}, \mathbf{x}^{-\{{i,j}\}}_{<t})
    \big)
    \Big]
\end{equation}

Under the Additive Noise Model (ANM) assumption, the randomness in $x^i_t$ arises solely from the noise term $\epsilon^i_t$. Consequently, the conditional probability density function is equivalent to the noise density shifted by the deterministic prediction function $g_i$:
\begin{equation}
    P(x^i_t \mid \mathbf{x}_{<t}) = P_{\epsilon} \left( x^i_t - g_i(\mathbf{x}_{<t}) \right)
\end{equation}

If $X^j$ is functionally non-causal for $X^i$, then by definition, the function $g_i$ does not depend on the history of $X^j$. Mathematically, this implies invariance:
\begin{equation}
    g_i(\mathbf{x}_{<t}) = g_i(\mathbf{x}^{i}_{<t}, \mathbf{x}^{-\{{i,j}\}}_{<t})
\end{equation}

Substituting this invariance into the density formulation:
\begin{equation}
    \begin{aligned}
        P(x^i_t \mid \mathbf{x}_{<t}) &= P_{\epsilon} \left( x^i_t - g_i(\mathbf{x}^{i}_{<t}, \mathbf{x}^{-\{{i,j}\}}_{<t}) \right) \\
        &= P(x^i_t \mid \mathbf{x}^{i}_{<t}, \mathbf{x}^{-\{{i,j}\}}_{<t})
    \end{aligned}
\end{equation}

Since the full conditional distribution $P(x^i_t \mid \mathbf{x}_{<t})$ is identical to the restricted distribution $P(x^i_t \mid \mathbf{x}^{i}_{<t}, \mathbf{x}^{-\{{i,j}\}}_{<t})$ at every time step $t$, the KL divergence term vanishes:
\begin{equation}
    D_{\text{KL}} \Big( P(x^i_t \mid \mathbf{x}_{<t}) \,\Big\| \, P(x^i_t \mid \mathbf{x}^{i}_{<t}, \mathbf{x}^{-\{{i,j}\}}_{<t}) \Big) = \mathbb{E} \left[ \log \frac{\cancel{P(x^i_t \mid \mathbf{x}_{<t})}}{\cancel{P(x^i_t \mid \mathbf{x}^{i}_{<t}, \mathbf{x}^{-\{{i,j}\}}_{<t})}} \right] =  0
\end{equation}
Thus, the Causally Conditioned Directed Information is exactly zero.
\end{proof}

\noindent\textbf{Remark (Failure of the Converse in Heteroscedastic Systems).} 
It is crucial to note that the converse does not hold for general dynamical systems. If a variable $X^j$ influences the \textit{variance} (or higher-order moments) of $X^i$ without affecting its conditional mean, then $g_i(\mathbf{x}_{<t}) = g_i(\mathbf{x}^{-j}_{<t})$ holds, satisfying the condition for Functional Non-Causality (zero NGC). However, in such a case, the probability distributions would differ in shape (e.g., width), i.e., $P(x^i_t \mid \mathbf{x}_{<t}) \neq P(x^i_t \mid \mathbf{x}^{-j}_{<t})$. The Directed Information, which detects any distributional discrepancy, would correctly be non-zero ($I > 0$). This gap highlights precisely why standard functional methods fail on the Mixed Physics benchmark, while probabilistic objectives succeed.

\subsection{Proof of Zero Self-Information under Causal Conditioning}
\label{proof:zero_self_info}

\textbf{Proposition.} \textit{For any variable $X^i$, the Causally Conditioned Directed Information from $X^i$ to itself is exactly zero:}
\begin{equation}
    I(X^i \to X^i \mid \mathbf{X}^{-\{{i}\}}) = 0
\end{equation}

\begin{proof}
Recall the definition of Causally Conditioned Directed Information from a source $X^j$ to a target $X^i$, conditioned on the set of all other variables $\mathbf{X}^{-\{{i,j}\}}$:
\begin{equation}
    I(X^j \to X^i \mid \mathbf{X}^{-\{{i,j}\}})
    = \sum_{t=1}^T \mathbb{E} \Big[
    D_{\mathrm{KL}} \big(
    P(x^i_t \mid \mathbf{x}_{<t}) \,\big\| \,
    P(x^i_t \mid \mathbf{x}^{i}_{<t}, \mathbf{x}^{-\{{i,j}\}}_{<t})
    \big)
    \Big]
\end{equation}
where $\mathbf{x}_{<t}$ denotes the full history of all variables in the system up to time $t-1$.

To evaluate the self-information flow, we set the source variable to be the target itself, i.e., let $j = i$.
In this case, the exclusion set $\mathbf{X}^{-\{{i,j}\}}$ becomes $\mathbf{X}^{-\{{i,i}\}} = \mathbf{X}^{-\{{i}\}}$. The term inside the second probability distribution becomes:
\begin{equation}
    \{ \mathbf{x}^{i}_{<t}, \mathbf{x}^{-\{{i}\}}_{<t} \} \equiv \mathbf{x}_{<t}
\end{equation}
This is simply the full system history $\mathbf{x}_{<t}$.

Substituting this back into the KL divergence term:
\begin{equation}
    D_{\mathrm{KL}} \big( P(x^i_t \mid \mathbf{x}_{<t}) \,\|\, P(x^i_t \mid \mathbf{x}_{<t}) \big)
\end{equation}
Since the Kullback-Leibler divergence between two identical distributions is exactly zero ($D_{\mathrm{KL}}(P \| P) = 0$), every term in the summation vanishes. Therefore:
\begin{equation}
    I(X^i \to X^i \mid \mathbf{X}^{-\{{i}\}}) = \sum_{t=1}^T 0 = 0
\end{equation}
\end{proof}

\section{Limitations and Broader Impacts}
\label{app:limitations_impacts}

\subsection{Limitations} 
\label{limit}
While Mask2Cause successfully captures both mean and variance-driven causality, the heteroscedastic NLL objective fundamentally assumes the underlying conditional distribution is Gaussian. Because Gaussian NLL penalizes errors quadratically, extreme outliers in heavy-tailed data force the network to artificially inflate the predicted variance, which can wash out the structural gradient (potentially leading to spurious edge detection). If heavy tails are detected in the dataset (e.g., via a high kurtosis statistic), the framework can however be trivially adjusted by swapping the Gaussian likelihood for a Student-t or Laplace distribution NLL. Like all methods grounded in the Granger Causality framework, Mask2Cause relies on the assumption of causal sufficiency. It assumes there are no unobserved latent confounders driving the observed variables. In the presence of hidden confounders, Granger-based methods may infer spurious direct edges. Extending this framework to detect or mitigate hidden confounding remains an important direction for future work. Our architecture assumes the causal graph is fixed. While the temporal dynamics themselves are highly non-stationary (yielding time-varying predictive moments), the underlying structural adjacency matrix $\hat{\mathbf{A}}$ is assumed to be static across the evaluated time horizon $T$. Adapting the masking mechanism to handle non-stationary, regime-switching causal structures is a promising avenue for further research.

\subsection{Broader Impacts}
\label{impact}
Causal discovery from time series has profound implications across domains ranging from climate science to financial econometrics. On the positive side, robust causal graphs can improve the interpretability and safety of automated decision-making systems (e.g., medical interventions or traffic grid optimization) by distinguishing true mechanisms from spurious correlations. A potential negative impact arises if the models are applied to sensitive human-centric data without accounting for unobserved confounders, which could lead to flawed causal conclusions and biased policy interventions. To mitigate this, practitioners must validate the discovered graphs against domain expertise before deployment.

\end{document}